\titleformat*{\section}{\Large\bfseries}
\definecolor{shadecolor}{gray}{0.9}
\newcounter{parcount}
\lstdefinestyle{mystyle}{
    commentstyle=\color{OliveGreen},
    numberstyle=\tiny\color{black!60},
    stringstyle=\color{BrickRed},
    basicstyle=\ttfamily\scriptsize,
    breakatwhitespace=false,
    breaklines=true,
    captionpos=b,
    keepspaces=true,
    numbers=none,
    numbersep=5pt,
    showspaces=false,
    showstringspaces=false,
    showtabs=false,
    tabsize=2
}
\crefname{lemma}{lemma}{lemmas}
\Crefname{lemma}{Lemma}{Lemmas}
\crefname{theorem}{theorem}{theorems}
\Crefname{theorem}{Theorem}{Theorems}
\crefname{prop}{proposition}{propositions}
\Crefname{prop}{Proposition}{Propositions}
\newtheorem{thm}{Theorem} % reset theorem numbering for each chapter
\newtheorem{prop}[thm]{Proposition}
\newtheorem{lemma}[thm]{Lemma}
\renewcommand{\mid}{~\vert~}
\newacronym{KL}{kl}{Kullback-Leibler}
\newacronym{POPELBO}{pop-elbo}{\emph{population evidence lower bound}}
\newacronym{PROELBO}{pro-elbo}{\emph{profile evidence lower bound}}
\newacronym{SVI}{svi}{stochastic variational inference}
\newacronym{VI}{vi}{variational inference}
\newacronym{ADVI}{advi}{automatic differentiation variational inference}
\newacronym{LDA}{lda}{latent Dirichlet allocation}
\newacronym{SMC}{smc}{Sequential Monte Carlo}
\newacronym{VB}{vb}{variational Bayes}
\newacronym{TDVI}{tdvi}{transdimensional variational inference}
\newacronym{STDVI}{stdvi}{sequential transdimensional variational inference}
\newacronym{MCMC}{mcmc}{Markov chain Monte Carlo}
\newacronym{RJMCMC}{rjmcmc}{reversible jump Markov chain Monte Carlo}
\newacronym{TDMCMC}{tdmcmc}{transdimensional Markov chain Monte Carlo}
\newacronym{SLDS}{slds}{switching linear dynamical system}
\newacronym{HDP-SLDS}{hdp-slds}{hierarchical Dirichlet process switching linear dynamical system}
\newacronym{MLM}{mlm}{masked language model}
\newacronym{CBOW}{cbow}{continuous bag of words}
\newacronym{MoE}{moe}{mixture-of-experts}
\newacronym{SGNS}{sgns}{skip-gram with negative sampling}
\newacronym{LLM}{llm}{large language model}
\newacronym{AWE}{awe}{attention word embedding}
\newacronym{OOD}{ood}{out-of-distribution}
\title{From Unstructured Data to In-Context Learning: Exploring What Tasks Can Be Learned and When}
\author{
  Kevin Christian Wibisono\\
        University of Michigan, Statistics\\
  kwib@umich.edu\\
  \and
  Yixin Wang\\
          University of Michigan, Statistics\\
  yixinw@umich.edu\\
  }
\date{\today}
\begin{document}
\maketitle

%\begin{bibunit}[abbrvnat]
% !TEX root = attention.tex

\begin{abstract}
Large language models (LLMs) like transformers demonstrate impressive
in-context learning (ICL) capabilities, allowing them to make
predictions for new tasks based on prompt exemplars without parameter
updates. While existing ICL theories often assume structured training
data resembling ICL tasks (e.g., x-y pairs for linear regression),
LLMs are typically trained unsupervised on unstructured text, such as
web content, which lacks clear parallels to tasks like word analogy.
To address this gap, we examine what enables ICL in models trained on
unstructured data, focusing on critical sequence model requirements
and training data structure. We find that many ICL capabilities can
emerge simply from co-occurrence of semantically related word pairs in
unstructured data; word analogy completion, for example, can provably
arise purely through co-occurrence modeling, using classical language
models like continuous bag of words (CBOW), without needing positional
information or attention mechanisms. However, positional information
becomes crucial for logic reasoning tasks requiring generalization to
unseen tokens. Finally, we identify two cases where ICL fails: one in
logic reasoning tasks that require generalizing to new, unseen
patterns, and another in analogy completion where relevant word pairs
appear only in fixed training positions. These findings suggest that
LLMs' ICL abilities depend heavily on the structural elements within
their training data.\footnote{Software that replicates the empirical
studies can be found at
\url{https://github.com/yixinw-lab/icl-unstructured}. Details on
implementation, experiments and data sets are provided in Appendix
\ref{app:exp-details}.}
\end{abstract}

Keywords: in-context learning, language models, continuous bag of words, co-occurrence,\\positional embeddings, transformers

\section{Introduction}
\label{sec:intro}
Large language models (LLMs) such as transformers demonstrate remarkable in-context learning (ICL) abilities~\citep{brown2020language}: without any parameter updates, they can recognize tasks and generate predictions from prompt examples.  For instance, given the prompt \textit{dog anjing, cat kucing, lion singa, elephant}, a well-trained LLM should detect the English-to-Indonesian pattern in the prompt and predict \textit{gajah}---the Indonesian translation for \textit{elephant}---as the most likely next token. The ICL capabilities of LLMs are surprising for two main reasons. First, these models are trained in an unsupervised manner on unstructured natural language data through next-token prediction, without any loss function specifically designed for ICL. Second, the training data for LLMs likely lacks sequences resembling typical ICL prompts, i.e., of the form $c_1d_1 \cdots c_Kd_K$, where $(c_k,d_k)$'s represent word pairs with specific semantic relationships.

Many efforts have sought to understand ICL from theoretical and empirical perspectives, e.g., gradient descent in regression and Bayesian inference. While insightful, these analyses often rely on structured training data that mirrors ICL tasks. For instance, they train on sequences of x-y pairs from various linear regression tasks and test on similar data. In practice, however, LLMs are trained in an unsupervised manner on unstructured text data, such as web content, which bears little resemblance to typical ICL tasks like word analogy. Consequently, these analyses may only partially capture the complexities of ICL.

\textbf{This work.} We investigate common ICL tasks to identify what tasks can be learned in context by a model trained on unstructured data. Specifically, we examine essential components of sequence modeling that enable in-context learning, along with requirements on the unstructured training data.

The first set of (theoretical and empirical) results focuses on ICL for word analogy completion using frequently co-occurring tokens~\citep{brown2020language,todd2024function}. This task involves identifying relationships between word pairs, such as \textit{(country)-(capital)} and \textit{(English word)-(Indonesian translation)}, then applying the same relationship to complete a sequence. For this task (see left of Figure \ref{fig:illustration}), we explore cases where training sentences contain one or two types of word pairs with distinct semantic relationships. We prove that, in most cases, ICL can arise by simply modeling word co-occurrence using classical (pre-transformer) language models like continuous bag of words (CBOW) \citep{mikolov2013efficient}, without needing positional information or attention mechanisms. 

% We further validate this result with prompting and synthetic data experiments.

The second set of results involves ICL for logic reasoning tasks that require recognizing patterns that do not commonly co-occur in a sentence, such as \textit{(word)-(first letter)}~\citep{xu2024large,chen2024parallel}. For this task (see middle of Figure \ref{fig:illustration}), we investigate scenarios where training sentences contain one or two distinct patterns, as well as a more realistic scenario where \textcolor{violet}{nuisance tokens} are present. We prove that \textit{positional information and blocked nuisance structure} (e.g., \textit{\textcolor{violet}{pqrs}} in Figure \ref{fig:illustration}) \textit{are crucial for the success of ICL in these tasks}. This finding aligns with \citeauthor{chen2024parallel}'s [\citeyear{chen2024parallel}] observation that parallel structures in pre-training data support ICL. We also find that learned positional embeddings generally perform better, except in scenarios where the nuisance tokens are not clustered in blocks.

\begin{figure}
  \centering
  \includegraphics[height=4.8cm]{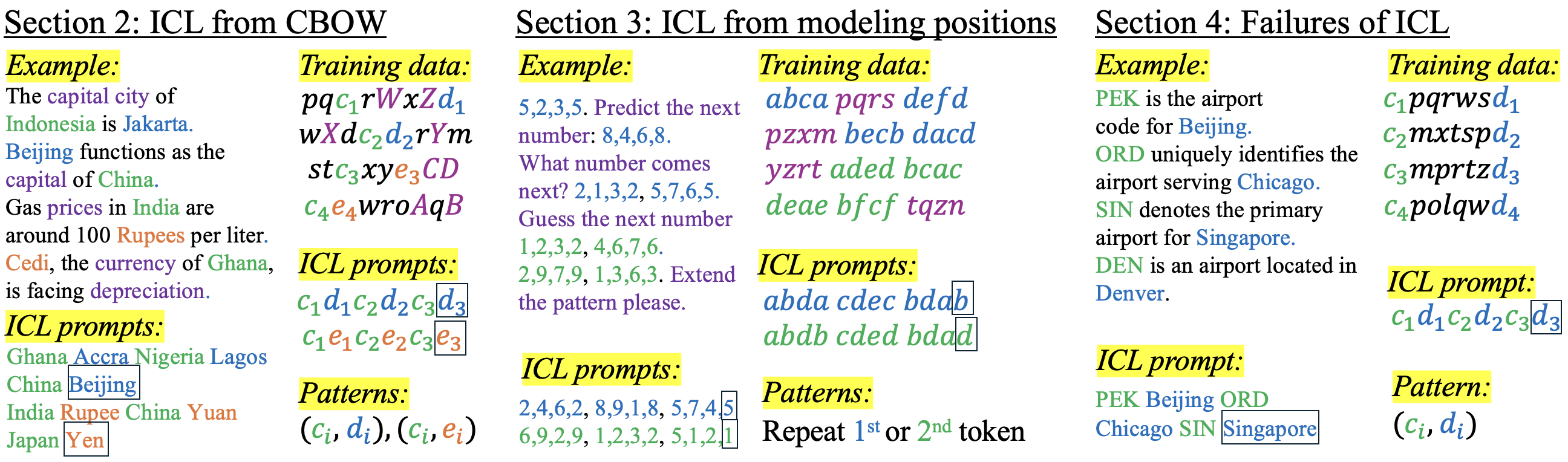}
  %\hspace{0.5cm}
   \caption{This paper identifies essential components for in-context learning (ICL) from pre-training on unstructured natural language data. Left sub-panels, right sub-panels, and boxed letters denote NLP examples, our abstractions, and expected outputs, respectively. Section \ref{sec:co-occ} shows that ICL for word analogy tasks can arise via modeling co-occurrence information using classical language models like continuous bag of words (CBOW) (\textcolor{violet}{violet} represents relationship-specific nuisance tokens). Section \ref{sec:icl-pos} establishes the necessity of modeling positional information and blocked nuisance structure for ICL tasks, enabling pattern recognition and generalization to novel tokens (\textcolor{violet}{violet} represents nuisance tokens). Section \ref{sec:failed} presents scenarios where ICL fails, providing theoretical explanations that underscore the critical role of training data structure in enabling ICL in language models.}   
  \label{fig:illustration}
\end{figure}

Finally, we present two scenarios where ICL fails regardless of model architectures (see right of Figure \ref{fig:illustration}). In the first scenario (left example), we consider a logic reasoning task that involves identifying and completing meta-patterns within sequences. Here, each training sequence repeats the pattern established by its starting tokens; the ICL task sequence then requires the model to recognize this meta-pattern of repetition and generalize it to a novel, unseen starting pattern. In the second scenario (right example), we examine a word analogy completion task in which relevant word pairs appear in the unstructured training sentences but are restricted to specific fixed positions. These findings, along with their empirical and theoretical explanations, underscore that \textit{LLMs require specific structures in the pre-training data to exhibit ICL ability}.

\textbf{Summary of contributions.} We (1) theoretically and empirically show that ICL for word analogy tasks with semantically related word pairs can arise from modeling \textit{co-occurrence patterns} using CBOW, (2) prove that, to recognize token patterns and generalize them to novel tokens, ICL requires \textit{modeling positional information} and \textit{blocked nuisance structure}, and (3) present scenarios where ICL fails, highlighting the crucial role of \textit{training data structure} for ICL.

\textbf{Related work.} Below, we highlight some of these studies and explain how our research aligns with, yet differs from, these approaches. We include a detailed discussion of related work in \Cref{app:rel-work}.
Numerous studies have connected ICL to classical methods, including gradient descent \citep{akyurek2022learning, oswald2023transformers, dai2023why, zhang2024trained, ahn2024transformers}, Bayesian inference \citep{wang2023large,zhang2023what,chiang2024understanding}, and Newton’s method~\citep{fu2023transformers}. In contrast, \textit{our work links ICL to the continuous bag of words (CBOW) model}, showing that ICL for word analogy tasks can be achieved by learning co-occurrence patterns.
Several studies have examined the pre-training aspects of ICL, such as data distribution \citep{min2022rethinking, chan2022data, kossen2023in} and task diversity \citep{raventos2023pretraining, yadlowsky2023pretraining}. By comparison, \textit{our work emphasizes the importance of co-occurrence, positional information, and training data structure} for ICL to arise.
Other research has explored ICL in specific data-generating processes, such as discrete functions \citep{bhattamishra2023understanding} and autoregressive processes \citep{sander2024how}. In contrast, \textit{our work centers on data characterized by semantically related word pairs and repeating token patterns}.

%\vspace{-4mm}
\section{In-context learning can arise by modeling co-occurrence via CBOW}
\label{sec:co-occ}
%\vspace{-2mm}
In this section, we focus on in-context learning (ICL) for word analogy tasks involving word pairs that frequently co-occur in training sentences; see \Cref{fig:illustration} (left). To motivate the discussion, we present two experiments using the LLaMA 2 model \citep{touvron2023llama} involving countries (or US states) and their capital cities (see Appendix \ref{app:exp-details} for data sources). The prompts follow the format $c_1 d_1, c_2 d_2, \cdots, c_6 d_6, c_7$, where $c_i$ is a country (or US state) and $d_i$ is its capital city. In this scenario, we consider ICL successful if the model outputs $d_7$---the capital city of $c_7$---as the most likely token.

\textbf{Experiment 1.} We consider all 160 countries with a population exceeding one million in 2022. Among these countries, 31 have capital cities that are not their most populous cities, denoted by \textit{type A}. The remaining 129 countries fall under \textit{type B}. Each ICL prompt includes three type A countries among $c_1, \cdots, c_6$ to emphasize that the desired relationship is \textit{(country)-(capital)} rather than \textit{(country)-(largest city)}. Subsequently, we randomly generate 1,000 prompts, with 500 having a $c_7$ being a type A country and 500 having a $c_7$ being a type B country.  The model's ICL accuracy is $0.58$ for type A and $0.96$ for type B.

\textbf{Experiment 2.} We consider all 50 US states, among which 33 are of \textit{type A} and 17 are of \textit{type B}, similarly defined. Following the setup in Experiment 1, we generate prompts for these states. The ICL accuracy is $0.69$ for type A and $0.84$ for type B.

In both experiments, LLaMA 2 performs better on type B prompts (i.e., the capital city as the largest city). Since larger cities tend to appear more frequently as compared to smaller ones in the model's pre-training data, this naturally raises the question: \textit{Can/does ICL with frequently co-occuring word pairs arise purely from modeling co-occurrence patterns?}

\textbf{ICL via classical non-transformer-based language models.} We prove that, for word analogy tasks with frequently co-occurring word pairs, ICL can be achieved by modeling token co-occurrence---without needing positional encoding or attention mechanisms---using classical, non-transformer language models such as the continuous bag of words (CBOW) model \citep{mikolov2013efficient}. (It \textit{does not} imply that ICL in transformer-based models arises through learning co-occurrence patterns.) We utilize CBOW variant where each center word is modeled conditional on all other words in a sentence, not just neighboring words. Specifically, each word $w$ has center and context embeddings $u_w$ and $v_w$ of the same dimension. Given a sentence $x_1 x_2 \cdots x_I$, the $i$-th word ($x_i$) is distributed as $p(x_i = k \mid x_{-i}) \propto \exp( (u_k^\top  \sum_{j \neq i} v_{x_j})/(I-1))$, with $u_w$'s and $v_w$'s learned by minimizing cross-entropy losses across all positions.

\textbf{Roadmap of Section \ref{sec:co-occ}.} In Section \ref{sec:icl-single}, we begin by considering a simple ICL task of the form $c_{i_1} d_{i_1} \cdots c_{i_{\ell}} d_{i_{\ell}} c_{i_{\ell+1}}$, where $(c_i, d_i)$ represents a frequently co-occuring word pair  (e.g., a country and its capital city) and $i_1, i_2, \cdots, i_{\ell+1}$ are all distinct. The focus is to investigate whether a trained CBOW model can correctly output $d_{i_\ell}$.
We also explore two other scenarios: ICL tasks of the form $c_{i_1} d_{i_1} \cdots c_{i_{\ell}} d_{i_{\ell}} c_{i_{\ell+1}}$ and $c_{i_1} e_{i_1} \cdots c_{i_{\ell}} e_{i_{\ell}} c_{i_{\ell+1}}$ in Section \ref{sec:icl-connected} (two connected word relationships), as well as $c_{i_1} d_{i_1} \cdots c_{i_{\ell-1}} d_{i_{\ell-1}} c_{i_\ell}$ and $e_{i_1} f_{i_1} \cdots e_{i_{\ell}} f_{i_{\ell}} e_{i_{\ell+1}}$ (two disjoint word relationships) in Section \ref{sec:icl-disconnected}. Section \ref{sec:synth-exp} concludes with synthetic experiments supporting the theory.

%\vspace{-2mm}
\subsection{In-context learning on single-relationship word analogy tasks}
\label{sec:icl-single}

We investigate ICL in single-relationship word analogy tasks, where the training data contains only one type of relationship between frequently co-occurring word pairs. This task takes the form of $c_{i_1} d_{i_1} \cdots c_{i_{\ell}} d_{i_{\ell}} c_{i_{\ell+1}}$, where $(c_i, d_i)$ pair represents a frequent co-occurrence, such as a country and its capital city. The vocabulary consists of $c_{1:K}, d_{1:K}, r_{1:L}$, where $r_i's$ represent other words (e.g., stop words). We first introduce Theorem \ref{prop:icl-cbow}, which states that ICL can arise if each sentence consists of exactly one $(c_i, d_i)$ pair, as long as the number of in-context examples ($\ell$) is not too large. To simplify calculations, we replace the cross-entropy loss with squared loss by removing the softmax activation and comparing outputs against the one-hot encoding of target words. The proof is in Appendix \ref{app:proof-1}. 

\begin{thm}[ICL on single-relationship word analogy tasks]
\label{prop:icl-cbow}
    Let $K, L \geq S \geq 3$. Suppose each training sentence of length $S$ is generated by selecting one $(c_i, d_i)$ pair and $S - 2$ distinct $r_i$'s uniformly at random. We train a CBOW model with the squared loss and a sufficiently large embedding dimension on these sentences. Given a prompt $c_{i_1} d_{i_1} \cdots c_{i_{\ell}} d_{i_{\ell}} c_{i_{\ell+1}}$ with distinct $i_k$'s, the model correctly predicts $d_{i_{\ell+1}}$ if and only if $$2\ell + 1 < \frac{KL(S-1)^3}{(K+L)(S-2)^2(S-1)+K(S-2)(S-1)^2-2(S-2)^4}.$$
\end{thm}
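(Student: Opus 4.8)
The plan is to reduce the whole problem to a multivariate linear least-squares computation and then read off the prediction by comparing a handful of scores. First I would use the ``sufficiently large embedding dimension'' hypothesis to decouple the center and context embeddings: since they interact only through the inner products $u_k^\top v_w$, and any real matrix $M$ with $M_{kw}=u_k^\top v_w$ is realizable once the dimension is at least the vocabulary size, the population objective becomes a convex quadratic in the \emph{unconstrained} matrix $M$. Encoding each (sentence, target) example as a context-count vector $n\in\{0,1\}^{2K+L}$ (the indicator of the bag minus the target word) and a one-hot label $y$, dropping the softmax turns the squared loss into $\mathbb{E}\lVert\tfrac{1}{S-1}Mn-y\rVert^2$, whose minimizer is the least-squares solution $M^\star=(S-1)\,C\,\Sigma^{-1}$ with $\Sigma=\mathbb{E}[nn^\top]$ and $C=\mathbb{E}[yn^\top]$. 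Here I would also record that $\Sigma\succ0$, which is where $K,L\ge S\ge 3$ enters.

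Next I would compute $\Sigma$ and $C$ in closed form, exploiting the decisive structural fact that each sentence contains exactly one pair: two distinct $c$'s, two distinct $d$'s, or an unmatched $c_i,d_j$ never co-occur, so all of those blocks of $\Sigma$ vanish off their natural support. Writing $\alpha$ for the common $c$/$d$ diagonal entry, $\beta$ for the matched $c_i$--$d_i$ entry, $\gamma$ for the (constant) pair-word--noise entry, and $\delta,\epsilon$ for the diagonal and off-diagonal noise entries, $\Sigma$ reduces to scaled identities, a scaled identity coupling each $c_i$ to $d_i$, rank-one all-ones couplings $\gamma\,\mathbf 1\mathbf 1^\top$ between pair-words and noise words, and an ``identity-plus-all-ones'' block among the $r$'s. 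Rather than invert $\Sigma$ globally, I would solve $\Sigma z=n_{\mathrm{prompt}}$ directly, where $n_{\mathrm{prompt}}$ is the count vector of $c_{i_1}d_{i_1}\cdots c_{i_\ell}d_{i_\ell}c_{i_{\ell+1}}$. The prompt distinguishes only three index classes---the $\ell$ ``paired'' indices, the lone query index $i_{\ell+1}$, and the $K-\ell-1$ absent ones---and all noise words are interchangeable, so a seven-parameter ansatz for $z$ (values $P,Q,R$ on the $c$-coordinates, $P',Q',R'$ on the $d$-coordinates, and a common $T$ on every $r$-coordinate) collapses $\Sigma z=n_{\mathrm{prompt}}$ to a small linear system.

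I would then form the score vector $s=Cz$ and compare entries. The payoff of the sparse structure is that each candidate's score simplifies dramatically: $s_{d_i}\propto z_{c_i}+(S-2)T$ and $s_{c_i}\propto z_{d_i}+(S-2)T$, so ranking the $c$- and $d$-candidates just ranks the corresponding coordinates of $z$. A short calculation shows the query differences $Q-P$, $Q-R$, and $Q-Q'$ are strictly positive \emph{and independent of $\ell$} (they reduce to positive combinations of $1/(\alpha\pm\beta)$), so the target $d_{i_{\ell+1}}$ automatically outscores every other $c$- and $d$-word. Consequently the whole question reduces to the single comparison $s_{d_{i_{\ell+1}}}>s_{r_j}$ against the (by symmetry, unique) noise score---which is exactly why both $K$ and $L$ surface in the threshold and why the statement is a genuine ``if and only if.''

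The main obstacle is the final algebra. Solving the $r$-equation isolates the common noise coordinate $T$, and this is where prompt length enters: $T$ is proportional to $2\ell+1$ divided by an $\ell$-independent denominator. Substituting $T$ back through the auxiliary quantity $u=1-\gamma L T$ into $Q$ and into $s_{r_j}$, and then clearing denominators in $s_{d_{i_{\ell+1}}}-s_{r_j}>0$, is a lengthy but mechanical simplification that I expect to collapse precisely to $2\ell+1<KL(S-1)^3/[(K+L)(S-2)^2(S-1)+K(S-2)(S-1)^2-2(S-2)^4]$. Keeping the bookkeeping of the many $S$-dependent constants $\alpha,\beta,\gamma,\delta,\epsilon$ organized---and checking the sign of the denominator so the inequality direction is preserved---will be the delicate part; everything upstream is forced by convexity and the permutation symmetry of the data-generating process.
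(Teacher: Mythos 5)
Your plan is correct and follows essentially the same route as the paper's proof: both use the large embedding dimension to reduce the problem to an unconstrained linear least-squares problem whose normal equations give the similarity matrix as (co-occurrence matrix)$\times$(second-moment matrix)$^{-1}$, both exploit the permutation symmetry of the data-generating process (with $p(c_i,c_j)=p(d_i,d_j)=p(c_i,d_j)=0$ for $i\neq j$) to make the inversion tractable, and both observe that the comparisons of $d_{i_{\ell+1}}$ against the other $c$- and $d$-words are $\ell$-independent and always favorable, so that only the comparison against the noise words produces the stated threshold in $2\ell+1$. Your variant of solving $\Sigma z = n_{\mathrm{prompt}}$ via a symmetry ansatz rather than writing down the full block inverse is a cosmetic difference; the remaining algebra you defer is exactly the computation the paper carries out.
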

%\vspace{-2mm}
As an example, when each training sentence contains exactly one \textit{country-capital} pair (i.e., $(c_i, d_i)$), Theorem~\ref{prop:icl-cbow} says that a trained CBOW model will correctly predict $d_{i_{\ell+1}}$ (i.e., the capital city of $c_{i_{\ell+1}}$) given an ICL prompt of the form $c_{i_1} d_{i_1} \cdots c_{i_{\ell}} d_{i_{\ell}} c_{i_{\ell+1}}$, provided that the prompt length is not too large. Intuitively, this behavior is due to the presence of $c_{i_{\ell+1}}$ in the ICL prompt, leading the model to correctly predict $d_{i_{\ell+1}}$ given the frequent occurrences of the pair $(c_{i_{\ell+1}}, d_{i_{\ell+1}})$ in the training data. However, when the prompt length is too large, the model will instead predict one of the $r_i$'s (see Theorem \ref{prop:icl-cbow}'s proof in Appendix \ref{app:proof-1} for more details). Moreover, if we let $L \rightarrow \infty$ and fix $K$ and $S$, the condition in Theorem \ref{prop:icl-cbow} becomes
$2\ell + 1 < K (S-1)^2 / (S-2)^2$. This inequality trivially holds if the prompt length is set to be $S-1$ to match the length of the training sentences. 

Furthermore, it is possible to adapt the proof of Theorem \ref{prop:icl-cbow} to handle the case when each sentence comprises exactly two (not one) different $(c_i, d_i)$ pairs. In this case, letting $L \rightarrow \infty$ and fixing $K$ and $S$, the model correctly predicts $d_{i_{\ell+1}}$ given the same ICL prompt if and only if $2\ell + 1 < \frac{K(K-2)(S-1)^2}{(K-2)(S-2)(S-4)-K}$. This upper bound is strictly larger than $K (S-1)^2 / (S-2)^2$: when each sentence contains exactly two $(c_i, d_i)$ pairs, ICL under the squared loss holds for longer prompts. 

\textbf{Experiments.} To empirically verify Theorem \ref{prop:icl-cbow} and its generalizations, we conduct experiments using the cross-entropy loss with $S = 8$, $K = 10$, $L = 20$, and $\ell = 3$. We explore multiple $(p_0, p_1, p_2)$ values, where $p_k$ denotes the probability of having exactly $k$ pairs of $(c_i, d_i)$ in the sentence. For each $(p_0, p_1, p_2)$ triple, we also introduce a more realistic setting where $c_i$ and $d_i$ do not always appear together by considering its \textit{corrupted} version. In this setup, each $(c_i, d_i)$ pair has a 25\% chance of being replaced with $(c_i, r_j)$ and a 25\% chance of being replaced with $(d_i, r_j)$ for some $j \in [L]$. More details are provided in Appendix \ref{app:exp-details}.

\begin{table}[t]
    \centering
    \caption{ICL on different single-relationship word analogy tasks, averaged over 10 repetitions, demonstrates stable, good performance across embedding dimensions ($d_E$), as Theorem~\ref{prop:icl-cbow} suggests. The corrupted setting also demonstrates excellent ICL ability under certain scenarios.}
    \begin{tabular}{c c c c c}
    \toprule
    \multicolumn{1}{c}{} & \multicolumn{2}{c}{Clean} & \multicolumn{2}{c}{Corrupted} \\
    \cmidrule(lr){2-3} \cmidrule(lr){4-5}
    $(p_0, p_1, p_2)$ & $d_E$ = 10 & $d_E$ = 100 & $d_E$ = 10 & $d_E$ = 100 \\
    \midrule
    $(0, 1, 0)$ & 0 & 0 & 0 & 0 \\
    $(0, 0, 1)$ & 0 & 0 & 0 & 0 \\
    $(1/2, 1/2, 0)$ & 1 & 0.99 & 0 & 0 \\
    $(1/2, 0, 1/2)$ & 1 & 1 & 1 & 1 \\
    $(0, 1/2, 1/2)$ & 1 & 1 & 0 & 0.01 \\
    $(1/3, 1/3, 1/3)$ & 1 & 1 & 1 & 1 \\
    \bottomrule
    \end{tabular}
    \label{tab:one-rel}
\end{table}

\textbf{Results.} Table \ref{tab:one-rel} displays the average accuracy for each scenario, calculated over 10 repetitions. Notably, when $(p_0,p_1,p_2)$ is $(0,1,0)$ or $(0,0,1)$, ICL under the cross-entropy loss achieves zero accuracy, in contrast to perfect accuracy with the squared loss as shown in Theorem \ref{prop:icl-cbow}. We believe this difference in accuracy is an artifact of the loss functions used, although its relevance is limited by the fact that it is unlikely for every sentence to contain at least one $(c_i, d_i)$ pair, in reality. On the other hand, perfect ICL performance is observed in other settings (e.g., when the training sentences contain either zero, one, or two $(c_i, d_i)$ pairs) in both the clean and corrupted scenarios. For an in-depth comparison of ICL performance using both the squared and cross-entropy loss across various numbers of demonstration examples, see Appendix \ref{app:comp-loss-ex}.

\subsection{In-context learning on dual-connected-relationship word analogy tasks}
\label{sec:icl-connected}
Building on the scenario that contains only a single type of relationship between frequently co-occurring word pairs, namely $(c_i, d_i)$, we now explore ICL on dual-connected-relationship word analogy tasks. Here, some words frequently co-occur with two different types of words in the training data, represented by the relationships $(c_i, d_i)$ and $(c_i, e_i)$. For instance, $c_i$ might represent a country, $d_i$ its capital city, and $e_i$ its currency. The vocabulary is comprised of $c_{1:K}, d_{1:K}, e_{1:K}, r_{1:L}$, where $r_i$'s represent other words. Moreover, the corresponding ICL tasks take the form $c_{i_1} d_{i_1} \cdots c_{i_{\ell}} d_{i_{\ell}} c_{i_{\ell+1}}$ and $c_{i_1} e_{i_1} \cdots c_{i_{\ell}} e_{i_{\ell}} c_{i_{\ell+1}}$, where the model is expected to output $d_{i_{\ell+1}}$ and $e_{i_{\ell+1}}$, respectively. These can be regarded as \textit{task selection} since the model should use the in-context examples to infer the tasks. We present Theorem \ref{prop:icl-cbow-2}, stating that a trained CBOW model can perform task selection if each sentence contains exactly two distinct $(c_i, d_i)$ or two distinct $(c_i, e_i)$ pairs with uniform probability.{\footnote{{We can also theoretically show that ICL works (up to a certain number of training examples) in this scenario, but the calculations are extremely tedious. Therefore, we only present empirical evidence in Table \ref{tab:two-con}.}}}

\begin{thm}[Task selection in CBOW]
    \label{prop:icl-cbow-2}
    Let $K, L \geq 2$ and $S \geq 5$. Suppose each training sentence of length $S$ is generated by selecting two distinct $(c_i, d_i)$ pairs or $(c_i, e_i)$ pairs, and $S - 4$ distinct $r_i$'s uniformly at random. We train a CBOW model with the squared loss and a large enough embedding dimension. Given a prompt $c_{i_1} d_{i_1} \cdots c_{i_{\ell}} d_{i_{\ell}} c_{i_{\ell+1}}$ ($c_{i_1} e_{i_1} \cdots c_{i_{\ell}} e_{i_{\ell}} c_{i_{\ell+1}}$) with distinct $i_k$'s, the model is more likely to predict $d_{i_{\ell+1}}$ ($e_{i_{\ell+1}}$) than $e_{i_{\ell+1}}$ ($d_{i_{\ell+1}}$). (The proof is in Appendix \ref{app:proof-thm-2}.)
\end{thm}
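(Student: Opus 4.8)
Following the strategy behind \cref{prop:icl-cbow}, my plan is to exploit the fact that a sufficiently large embedding dimension lets me treat the score matrix $M_{kw}:=u_k^\top v_w$ as a free real matrix, since $U^\top V$ can realize any matrix once the dimension matches the vocabulary size. The CBOW output for center token $k$ is then $\frac{1}{S-1}\sum_{j\neq i}M_{k,X_j}=\sum_w M_{kw}\,\phi_w(X_{-i})$, where $\phi_w(X_{-i})=\frac{1}{S-1}\sum_{j\neq i}\mathbb{1}[X_j=w]$ is the normalized context bag-of-words. Under the squared loss the population objective decouples across rows $k$ into ordinary least squares problems sharing the same design second moment $\Sigma=\mathbb{E}[\phi\phi^\top]$ (pooled over positions and sentences) and target cross moment $\rho_{k,w}=\mathbb{E}[\mathbb{1}[X_i=k]\,\phi_w]$, so the minimizer is $M_{k,\cdot}^\top=\Sigma^{-1}\rho_{k,\cdot}^\top$. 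Since CBOW is order-invariant, only the co-occurrence multiset of each sentence enters $\Sigma$ and $\rho$. Writing $a:=i_{\ell+1}$, the prediction of $d_{i_{\ell+1}}$ over $e_{i_{\ell+1}}$ is equivalent to $\sum_{w\in\text{context}}\beta_w>0$ with $\beta:=M_{d_a,\cdot}-M_{e_a,\cdot}$ (the common factor $\tfrac{1}{S-1}$ is irrelevant to the comparison).

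\textbf{Symmetry collapse of the score gap.} The data law is invariant under (i) simultaneously relabeling the country index on $c,d,e$, (ii) relabeling noise indices, and (iii) the swap $\tau:d_b\leftrightarrow e_b$ (holding $c,r$ fixed), because $d$- and $e$-sentences are equiprobable. Hence the minimizer is equivariant, and applying $\tau$ shows $\beta$ is $\tau$-antisymmetric: $\beta_{c_b}=\beta_{r_b}=0$ and $\beta_{d_b}=-\beta_{e_b}$ for all $b$. The prompt context contains only $c$-tokens and $d$-tokens, all with index distinct from $a$, so every $c$-term drops out and the gap collapses to
$$\sum_{w\in\text{context}}\beta_w=\ell\,\beta^{\neq},\qquad \beta^{\neq}:=\beta_{d_b}\big|_{b\neq a}=M^{\neq}_{dd}-M^{\neq}_{ed}.$$
It therefore suffices to prove $\beta^{\neq}>0$.

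\textbf{Solving on the antisymmetric subspace.} Both $\beta$ and $\rho_{d_a,\cdot}-\rho_{e_a,\cdot}$ lie in $W:=\mathrm{span}\{\delta_{d_b}-\delta_{e_b}\}$, which $\Sigma$ preserves, so I parametrize $W$ by the coefficients $\gamma_b$ of $\delta_{d_b}-\delta_{e_b}$ and solve $\tilde\Sigma\gamma=g$, where $\tilde\Sigma_{b'b}=\Sigma_{d_{b'}d_b}-\Sigma_{d_{b'}e_b}$ and $g_b=\rho_{d_a,d_b}-\rho_{e_a,d_b}$. The crucial co-occurrence facts are that $d$- and $e$-tokens never appear in the same sentence, giving $\Sigma_{d_{b'}e_b}=0$ and $\rho_{e_a,d_b}=0$; that $d_a,d_b$ co-occur in two-$(c,d)$ sentences, giving $g_b=\rho^{\neq}>0$ for $b\neq a$; and that a token appears once, so $g_a=\rho_{d_a,d_a}=0$. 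Thus $\tilde\Sigma=pI+qJ$ with $p+q=\Sigma_{d_bd_b}>0$ and $q=\Sigma_{d_{b'}d_b}>0$, while $g=\rho^{\neq}(\mathbf{1}-\delta_a)$.

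\textbf{Inversion and conclusion.} Because $\tilde\Sigma$ is the restriction of the positive-definite second-moment matrix $\Sigma$ to $W$, its eigenvalues $p$ and $p+qK$ are both positive, so Sherman--Morrison gives, for $b\neq a$,
$$\beta^{\neq}=\frac{1}{p}\Big(\rho^{\neq}-\frac{q}{p+qK}\,\rho^{\neq}(K-1)\Big)=\frac{\rho^{\neq}}{p}\cdot\frac{p+q}{p+qK}>0,$$
a product of strictly positive quantities. The $e$-prompt case follows identically after applying $\tau$. I expect the main obstacle to be rigorously justifying the inversion on $W$ and confirming that the sign survives it: the argument hinges entirely on the vanishing $d$--$e$ cross moments ($\Sigma_{d_{b'}e_b}=0$, $\rho_{e_a,d_b}=0$), which force $g_a=0$ and make $\tilde\Sigma$ an exactly invertible $pI+qJ$; the remaining work is the bookkeeping of the co-occurrence moments and a careful statement that the equivariant minimizer is determined on $W$ (where $\Sigma\succ0$), so that $\beta$, and hence the predicted score gap, is well defined.
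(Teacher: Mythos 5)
Your proposal is correct and reaches the paper's conclusion by the same overall skeleton --- closed-form least-squares minimizer $M=\rho\,\Sigma^{-1}$ (the paper's Lemma \ref{lem:1}, $A=B((S-2)B+C)^{-1}$), symmetry reduction to the single comparison $M_{d_a,d_b}>M_{e_a,d_b}$ for $b\neq a$ (the paper's $d_i^\top d_j>e_i^\top d_j$), and a sign check on the resulting scalar --- but your execution of the linear algebra is genuinely different and cleaner. The paper writes out the full $4\times 4$ block inverse of $(S-2)B+C$ with ten unknowns $q_1,\dots,q_{10}$, extracts four of the defining equations, and juggles them to show $q_8>q_6$ and hence $(K-2)q_6+q_5>(K-2)q_8+q_7$, finally verifying positivity of the factor $(e-a)(e+a(K-1))$. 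You instead observe that the swap $\tau:d_b\leftrightarrow e_b$ is a symmetry of the data law, so the score gap $\beta$ lives in the $\tau$-antisymmetric subspace $W$, on which $\Sigma$ acts as $pI+qJ$ thanks to the vanishing $d$--$e$ cross moments; Sherman--Morrison then gives the sign in one line. The two computations bottom out at literally the same quantities: your $p$ equals the paper's $e-a=\tfrac{S-1}{K}-\tfrac{S-2}{K(K-1)}$ and your $p+qK$ equals $e+a(K-1)$, so the final positivity checks coincide. What your route buys is transparency and robustness (the $pI+qJ$ structure makes clear exactly which co-occurrence facts drive the result, and the argument would extend to more than two connected relationships); what it still owes, as you note, is the justification that $\Sigma$ is positive definite on $W$ so that $\beta$ is well defined --- the paper sidesteps this by verifying $e-a>0$ and $e+a(K-1)>0$ directly from the explicit values of $p_1,p_3,p_4$, and you should do the same rather than appeal to abstract positive definiteness of the second-moment matrix, since strict positivity of $\Sigma$ on the full vocabulary is itself something to be checked. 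No gap in substance; only that bookkeeping remains.
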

%\vspace{-2mm}
According to Theorem \ref{prop:icl-cbow-2}, when each training sentence includes two $(c_i, d_i)$ pairs or two $(c_i, e_i)$ pairs, a trained CBOW model is capable of performing task selection. To intuitively understand this result, consider the ICL prompt of the first type, i.e., $c_{i_1} d_{i_1} \cdots c_{i_{\ell}} d_{i_{\ell}} c_{i_{\ell+1}}$. Here, the output is more likely to be $d_{i_{\ell+1}}$ than $e_{i_{\ell+1}}$ since $d_{i_{\ell+1}}$ co-occurs with the other $d_{i_j}$'s in the training data (and $e_{i_{\ell+1}}$ does not). Note that in Theorem~\ref{prop:icl-cbow-2}, we unrealistically require each sentence to contain either two distinct $(c_i, d_i)$ pairs or $(c_i, e_i)$ pairs. However, this condition is not necessary as we empirically show next.

\textbf{Experiments.} We use the cross-entropy loss with $S = 8$, $K = 10$, $L = 60$, and $\ell = 3$. Each training sentence is equally likely to be a \textit{cd} sentence (i.e., containing $(c_i, d_i)$ pairs) or a \textit{ce} sentence (i.e., containing $(c_i, e_i)$ pairs), but not both. We explore multiple $(p_0, p_1, p_2)$'s, where $p_k$ is the probability of having exactly $k$ pairs of $(c_i, d_i)$ for a \textit{cd} sentence, or $k$ pairs of $(c_i, e_i)$ for a \textit{ce} sentence. Additionally, we introduce three different scenarios: \textit{balanced}, where all $L$ random words are equally likely to occur in both \textit{cd} and \textit{ce} sentences; \textit{imbalanced}, where $L/3$ words are more likely to occur in \textit{cd} (\textit{ce}) sentences; and \textit{extreme}, where $L/3$ of the words can only occur in \textit{cd}  (\textit{ce}) sentences. More details are provided in Appendix \ref{app:exp-details}.

\begin{table}[t]
  \centering
  \setlength{\tabcolsep}{5pt}
    \caption{ICL on dual-\emph{connected}-relationship tasks, averaged over 10 repetitions, achieves perfect accuracy when $(p_0, p_1, p_2) \in \{(1/2, 0/ 1,2), (0, 1/2, 1/2), (1/3, 1/3, 1/3)\}$ regardless of architectures and embedding dimensions ($d_E$), as Theorem \ref{prop:icl-cbow-2} suggests. When $(p_0, p_1, p_2) = (1/2, 1/2, 0)$, ICL performs better under imbalanced or extreme scenarios and with larger $d_E$.}
    %\vspace{2mm}
    \begin{tabular}{c c c c c c c}
    \toprule
    \multicolumn{1}{c}{} & \multicolumn{2}{c}{Balanced} & \multicolumn{2}{c}{Imbalanced} & \multicolumn{2}{c}{Extreme}\\
    \cmidrule(lr){2-3} \cmidrule(lr){4-5} \cmidrule(lr){6-7}
    $(p_0, p_1, p_2)$ & $d_E$ = 10 & $d_E$ = 100 & $d_E$ = 10 & $d_E$ = 100 & $d_E$ = 10 & $d_E$ = 100\\
    \midrule
    $(0, 1, 0)$ & (0, 0) & (0, 0) & (0, 0) & (0, 0) & (0, 0) & (0, 0) \\
    $(0, 0, 1)$ & (0, 0) & (0, 0) & (0, 0) & (0, 0) & (0.07, 0.10) & (0, 0) \\
    $(1/2, 1/2, 0)$ & (0.53, 0.47) & (0.51, 0.50) & (0.69, 0.68) & (1, 1) & (0.94, 0.93) & (1, 1) \\
    $(1/2, 0, 1/2)$ & (1, 1) & (1, 1) & (1, 1) & (1, 1) & (1, 1) & (1, 1)\\
    $(0, 1/2, 1/2)$ & (1, 1) & (1, 1) & (1, 1) & (1, 1) & (1, 1) & (1, 1) \\
    $(1/3, 1/3, 1/3)$ & (1, 1) & (1, 1) & (1, 1) & (1, 1) & (1, 1) & (1, 1) \\
    \bottomrule
    \end{tabular}
    \label{tab:two-con}
    \\[-4mm] % Adjust negative space for reducing the gap
\end{table}

\textbf{Results.} Table \ref{tab:two-con} shows the accuracies of both tasks for each scenario, averaged over 10 repetitions. We observe a perfect accuracy when $(p_0, p_1, p_2) \in \{(1/2, 0/ 1,2), (0, 1/2, 1/2), (1/3, 1/3, 1/3)\}$ across all embedding dimensions and scenario types. The near-zero accuracy when $(p_0,p_1,p_2)$ or $(0,1,0)$ or $(0,0,1)$ is again an artifact of the cross-entropy loss, as discussed in Section \ref{sec:icl-single}.

Interestingly, ICL works in the imbalanced and extreme scenarios when $(p_0, p_1, p_2) = (1/2, 1/2, 0)$, where sentences do not contain more than one $(c_i, d_i)$ or $(c_i, e_i)$ pair. To see this, consider the balanced scenario where each $r_i$ is equally probable to appear in both types of sentences. Given a prompt of the form $c_{i_1} d_{i_1} \cdots c_{i_{\ell}} d_{i_{\ell}} c_{i_{\ell+1}}$, it is easy to see that the model should output $d_{i_{\ell+1}}$ or $e_{i_{\ell+1}}$ with equal probability. On the other hand, in the imbalanced and extreme scenarios, the information from the $r_i$'s can allow for task selection, thus contributing to the success of ICL.
%\vspace{-2mm}

\subsection{In-context learning on dual-disjoint-relationship tasks}
%\vspace{-2mm}
\label{sec:icl-disconnected}

We next replicate the experiments in Section \ref{sec:icl-connected}, but with disjoint word pair relationships of two distinct types with no overlapping tokens, i.e., $(c_i, d_i)$ and $(e_i, f_i)$. For example, $(c_i, d_i)$ represents a country and its capital city, and $(e_i, f_i)$ represents a company and its CEO. Our vocabulary consists of $c_{1:K}, d_{1:K}, e_{1:K}, f_{1:K}, r_{1:L}$, where $r_i$'s represent other words; see Appendix \ref{app:exp-details} for details.

\textbf{Results.} Table \ref{tab:two-discon} presents the accuracies of the ICL tasks $c_{i_1} d_{i_1} \cdots c_{i_{\ell}} d_{i_{\ell}} c_{i_{\ell+1}}$ and $e_{i_1} f_{i_1} \cdots e_{i_{\ell}} f_{i_{\ell}} e_{i_{\ell+1}}$ for each scenario, averaged over 10 repetitions. Similar to the connected setting in Section \ref{sec:icl-connected}, we observe a perfect accuracy when $(p_0, p_1, p_2) \in \{(1/2, 0/ 1,2), (0, 1/2, 1/2), (1/3, 1/3, 1/3)\}$ across all embedding dimensions and scenario types. However, when $(p_0, p_1, p_2) = (1/2, 1/2, 0)$, ICL already works well in the balanced scenario. Intuitively, this is because the two relationships are disjoint, thus making task selection easier. 

In addition, we consider a \textit{contaminated} version of the training data where \textit{cd} (\textit{ef}) sentences can contain some $e_i$'s and $f_i$'s ($c_i$'s and $d_i$'s). We also obtain a perfect accuracy when $(p_0, p_1, p_2)$ is in $\{(1/2, 0/ 1,2), (0, 1/2, 1/2), (1/3, 1/3, 1/3)\}$ across all embedding dimensions and scenario types.

% \subsection{Experiments on countries, US states, and their capital cities}
% \label{sec:co-occ-exp}

% We perform two experiments involving countries and their capital cities, as well as US states and their capital cities. Our prompts follow the format $c_1 d_1, c_2 d_2, \cdots, c_6 d_6, c_7$, where $c_i$ is a country or US state and $d_i$ is its capital city. Using the LLaMA 2 model \citep{touvron2023llama}, we compare the prediction for each prompt with its corresponding $d_7$. The experimental results support the theory. 

\begin{table}[t]
      \centering
      \setlength{\tabcolsep}{5pt}
    \caption{ICL on dual-\emph{disjoint}-relationship tasks, averaged over 10 repetitions, achieves perfect accuracy when $(p_0, p_1, p_2) \in \{(1/2, 0/ 1,2), (0, 1/2, 1/2), (1/3, 1/3, 1/3)\}$ regardless of architectures and embedding dimensions ($d_E$). When $(p_0, p_1, p_2) = (1/2, 1/2, 0)$, ICL already performs well under the balanced scenario.}
    %\vspace{2mm}
    \begin{tabular}{c c c c c c c}
    \toprule
    \multicolumn{1}{c}{} & \multicolumn{2}{c}{Balanced} & \multicolumn{2}{c}{Imbalanced} & \multicolumn{2}{c}{Extreme}\\
    \cmidrule(lr){2-3} \cmidrule(lr){4-5} \cmidrule(lr){6-7}
    $(p_0, p_1, p_2)$ & $d_E$ = 10 & $d_E$ = 100 & $d_E$ = 10 & $d_E$ = 100 & $d_E$ = 10 & $d_E$ = 100\\
    \midrule
    $(0, 1, 0)$ & (0, 0) & (0, 0) & (0, 0) & (0, 0) & (0, 0) & (0, 0) \\
    $(0, 0, 1)$ & (0, 0) & (0, 0) & (0.16, 0.14) & (0, 0) & (0.21, 0.29) & (0, 0) \\
    $(1/2, 1/2, 0)$ & (1, 1) & (0.82, 0.83) & (0.28, 0.27) & (0.95, 0.95) & (0.83, 0.85) & (0.91, 0.91) \\
    $(1/2, 0, 1/2)$ & (1, 1) & (1, 1) & (1, 1) & (1, 1) & (1, 1) & (1, 1)\\
    $(0, 1/2, 1/2)$ & (1, 1) & (1, 1) & (1, 1) & (1, 1) & (1, 1) & (1, 1) \\
    $(1/3, 1/3, 1/3)$ & (1, 1) & (1, 1) & (1, 1) & (1, 1) & (1, 1) & (1, 1) \\
    \bottomrule
    \end{tabular}
    \label{tab:two-discon}
    \\[-4mm] % Adjust negative space for reducing the gap
\end{table}

% In the first experiment, we focus on 160 countries with a population exceeding one million in 2022. Among these countries, 31 have capital cities that are not their most populous cities, denoted by \textit{type A}. The remaining 129 countries fall under \textit{type B}. Each ICL prompt includes three type A countries among $c_1, \cdots, c_6$ to emphasize that the desired relationship is \texttt{(country)-(capital)} rather than \texttt{(country)-(largest city)}. Subsequently, we randomly generate 1,000 prompts, with 500 having a $c_7$ representing a type A country and 500 having a $c_7$ representing a type B country. The ICL accuracies corresponding to type A and type B prompts are $0.58$ and $0.96$, respectively. 

% In the second experiment, we consider all 50 states, among which 33 are of \textit{type A} and 17 are of \textit{type B}, defined similarly. The ICL accuracies corresponding to type A and type B prompts are found to be $0.69$ and $0.84$, respectively. From both experiments, we notice that LLaMA 2 performs better on type B prompts (i.e., the capital city as the largest city). This suggests that ICL may arise from co-occurrence information, as larger cities tend to appear more frequently compared to smaller ones.

%\vspace{-2mm}

\subsection{Experiments on a synthetic corpus}
\label{sec:synth-exp}
%\vspace{-2mm}
We conduct experiments on a synthetic corpus consisting of \textit{(country)-(capital)} and \textit{(country)-(IOC code)} relationships. Each sentence in the corpus is categorized into exactly one of six possible categories: (1) exactly one country-capital pair; (2) exactly two country-capital pairs; (3) exactly one country-IOC pair; (4) exactly two country-IOC pairs; (5) exactly one country without any pair; and (6) no country. In sentences with country-capital pairs, each capital city can appear in any position relative to the country. Conversely, in sentences with country-IOC pairs, each IOC code must directly follow the country. The data source and corpus generation process are detailed in Appendix \ref{app:exp-details}.

Two models are trained on this corpus: a CBOW and a five-layer two-head autoregressive transformer. Both models have an embedding dimension of $100$. We then compare the ICL accuracies for both relationships given one to five in-context examples. For the CBOW model, the country-capital accuracies are $(0.81, 0.82, 0.78, 0.73, 0.65)$ and the country-IOC accuracies are $(0.15, 0.38, 0.59, 0.71, 0.79)$. Here, the $i$-th number corresponds to the accuracy given $i$ in-context examples. For the transformer, the accuracies are $(0.00, 0.15, 0.34, 0.22, 0.07)$ and $(1.00, 0.77, 0.78, 0.97, 0.99)$, respectively.

\textit{When using the transformer, we find that the accuracies for the country-IOC task are significantly higher compared to those for the country-capital task}. This is likely because each IOC code consistently follows the corresponding country in the corpus, similar to ICL prompts. On the other hand, ICL fails to work on the country-capital task, where there is no consistent pattern in how each pair occurs in the corpus. Meanwhile, \textit{ICL works decently well on both tasks under the CBOW model}.
\section{The essential role of positional information in enabling in-context learning }

\label{sec:icl-pos}

We examine another common example of in-context learning (ICL), where the task involves predicting the first (or second) token in a sequence. This task resembles general logic reasoning tasks that require recognizing patterns that do not typically co-occur in a sentence, such as \textit{(word)-(first letter)}~\citep{xu2024large,chen2024parallel}. While Section \ref{sec:co-occ} shows that positional encoding is irrelevant for ICL in word analogy tasks, positional information proves essential for such logic reasoning tasks. Specifically, we consider a simpler task of modeling $x_{i_1} x_{i_2} x_{i_3} x_{i_1}$. Theorem \ref{prop:icl-pe} underscores the importance of positional information to correctly predict $x_{i_1}$ from $x_{i_1} x_{i_2} x_{i_3}$ in a single-layer model, and provides a construction of an attention-based model achieving zero loss and perfect accuracy on this task. Its proof is in Appendix \ref{app:proof-2}.
\begin{thm}[Necessity of modeling positions]
\label{prop:icl-pe}
    Let the vocabulary be $\mathcal{V} = \{1,2,\cdots,|V|\}$ and the training sequences take the form $x_{i_1} x_{i_2} x_{i_3} x_{i_1}$, where $x_{i_1} \neq x_{i_2} \neq x_{i_3} \neq x_{i_1}$ are chosen uniformly at random from $\mathcal{V}$. Consider a one-layer model that predicts the last $x_{i_1}$ via a learned function $f(\{ x_{i_1}, x_{i_2}\}, x_{i_3})$ using the cross-entropy loss. In this case, it is not possible to achieve pefect accuracy or zero loss. On the other hand, we can achieve zero loss (and thus perfect accuracy) by incorporating positional information, i.e., via a learned function $\tilde{f}(\{ (x_{i_1}, 1), (x_{i_2}, 2)\}, (x_{i_3}, 3))$.
\end{thm}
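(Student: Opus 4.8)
The plan is to treat the two halves of the statement separately: a symmetry-based impossibility argument for the positionless model $f$, and an explicit copy-from-position-$1$ attention construction for the positional model $\tilde{f}$. The whole statement hinges on one structural fact: the input $\{x_{i_1}, x_{i_2}\}$ is an \emph{unordered} set, so $f$ cannot tell which of the first two tokens sat at position $1$, whereas $\tilde{f}$ can read this off from the positional tag.

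For the impossibility half, I would exhibit a pair of training sequences that collide in $f$'s input but carry different targets. Fixing three pairwise distinct tokens $a, b, c \in \mathcal{V}$ (available since the distinctness constraint forces $|V| \geq 3$), the sequences $a\,b\,c\,a$ and $b\,a\,c\,b$ both present $f$ with the same input $(\{a,b\}, c)$, yet their targets (the token at position $1$) are $a$ and $b$ respectively. Because the tokens are drawn uniformly among distinct triples, the ordered triples $(a,b,c)$ and $(b,a,c)$ are equiprobable, so conditioned on the input $(\{a,b\}, c)$ the target equals $a$ with probability $1/2$ and $b$ with probability $1/2$. Since $f$ must commit to a single output distribution $q$ on this input, its accuracy on these two equiprobable sequences is at most $1/2$, ruling out perfect accuracy; and its conditional cross-entropy is minimized at $q(a) = q(b) = 1/2$, giving value $\log 2 > 0$. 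As such collision inputs occur with positive probability, the expected loss is bounded away from zero and neither conclusion is attainable.

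For the possibility half, I would give an explicit single-layer attention model over the positionally augmented tokens $\{(x_{i_1},1),(x_{i_2},2)\}$ with query $(x_{i_3},3)$, making attention act as a copy operation. The idea is to embed each token $x$ at position $p$ so that its representation exposes both the token identity and the positional tag, choose the query so that its key-inner-product with the position-$1$ tag dominates that with the position-$2$ tag, and let each value carry its token embedding. After a softmax with large enough temperature, the attention weight concentrates on position $1$, so the attention output approaches the embedding of $x_{i_1}$; an output projection reading off token identity then produces logits peaked at class $x_{i_1}$. Scaling the attention logits and the output logits sends the predicted probability of the true target $x_{i_1}$ to $1$, driving the cross-entropy to its infimum of $0$ while attaining perfect accuracy.

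The main obstacle is the impossibility half, where all of the content lies: one must correctly identify the collision and verify that uniform sampling over distinct triples makes the two colliding sequences equiprobable, so that the conditional target distribution is exactly $(1/2,1/2)$ and the Bayes-optimal cross-entropy is genuinely positive rather than merely nonzero in some weaker sense. The construction is essentially mechanical once the copy-from-position-$1$ mechanism is fixed; the only care needed there is the limiting argument, since the cross-entropy infimum $0$ is approached as the logit magnitudes grow rather than attained at any finite parameter setting, which is the standard reading of ``achieving zero loss'' for softmax models.
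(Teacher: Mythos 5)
Your proposal is correct and follows essentially the same route as the paper: the impossibility half uses the identical collision between $abc\,a$ and $bac\,b$ (equiprobable under uniform sampling, forcing a $\log 2$ lower bound on the conditional cross-entropy and accuracy at most $1/2$), and the possibility half is the same copy-from-position-$1$ attention construction, which the paper instantiates concretely with embeddings $x_i = a e_i$, $p_1 = a^2\mathds{1}$, $p_2 = p_3 = \mathds{1}$ and the limit $a \to \infty$. Your remark that zero loss is only attained in the limit of growing logits matches the paper's own phrasing (``arbitrarily close to zero'').
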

%\vspace{-2mm}
Here, $f(\{ x_{i_1}, x_{i_2}\}, x_{i_3})$ represents a scenario where the model lacks positional information (e.g., $f$ is a one-layer autoregressive transformer without positional embeddings). Note that the output of this function is identical for inputs $x_{i_1}x_{i_2}x_{i_3}$ and $x_{i_2}x_{i_1}x_{i_3}$, which leads to the impossibility of attaining zero loss. In contrast, $\tilde{f}(\{ (x_{i_1}, 1), (x_{i_2}, 2)\}, (x_{i_3}, 3))$ refers to a scenario where the model has access to positional information. We provide a construction of $\tilde{f}$ that achieves zero loss in Appendix \ref{app:proof-2}. 

\textbf{Experiments.} We validate Theorem \ref{prop:icl-pe} by training transformers with causal masking to autoregressively learn sequences of the form $x_{i_1} x_{i_2} x_{i_3} x_{i_1}$, and assessing their accuracy in predicting the last token on a separate test data of the same pattern. We use $|V| = 20$ and an embedding dimension of $10$. We consider these settings: (i) \textit{number of layers}: 1, 5; (ii) \textit{positional embeddings}: learned, sinusoidal, no positional embeddings; and (iii) \textit{train-test split}: each token in the vocabulary is the first token in both the training and test sets (\textit{Both}), each token in the vocabulary is the first token in either set, but not both (\textit{Either}). More details are provided in Appendix \ref{app:exp-details}.
%\vspace{-1.5mm}
\begin{table}[t]
\centering
\caption{Prediction accuracy with single/multi-layer models. For ICL to occur, the first tokens of training sentences should cover the entire vocabulary (\textit{Both}). Also, positional embeddings are essential, especially in one-layer models.}
\begin{tabular}{c c c c c}
\toprule
\multicolumn{1}{c}{} & \multicolumn{2}{c}{Both} & \multicolumn{2}{c}{Either} \\
\cmidrule(lr){2-3} \cmidrule(lr){4-5}
Pos. emb. & 1-layer & 5-layer & 1-layer & 5-layer \\
\midrule
Learned & 1 & 1 & 0 & 0 \\
Sinusoidal & 1 & 1 & 0 & 0  \\
No pos. emb. & 0.30 & 0.89 & 0 & 0  \\
\bottomrule
\end{tabular}
\label{tab:icl-pos}
\end{table}

\textbf{Results.} Table \ref{tab:icl-pos} summarizes the results. Two main findings emerge: (1) for the model to generalize to unseen sentences, each token in $\mathcal{V}$ should be present as the first token in both the training and test sets; (2) positional embeddings are crucial when using only one~attention~layer. Note that in practice, the condition in (1) is likely met due to the vast size of LLMs' pre-training data.

\textbf{Multiple layers.} Proposition \ref{lem:pe} shows that multi-layer models can encode positional information without explicit positional embeddings.

\begin{prop}[Multi-layer models can encode positions]
\label{lem:pe}
    Consider the sentence $x_{i_1} x_{i_2} x_{i_3} x_{i_1}$. Using a two-layer autoregressive model, the model's final output for predicting the last $x_{i_1}$ is given by $t(x_{i_1} x_{i_2} x_{i_3}) := g_3\left( \{ f_1(\{x_{i_1}\}), f_2(\{x_{i_1}\},x_{i_2}) \}, f_3(\{x_{i_1},x_{i_2}\},x_{i_3}) \right)$ for some $f_1, f_2, f_3$, and $g_3$. 
\end{prop}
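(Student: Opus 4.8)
The plan is to unroll the two-layer autoregressive computation position by position and observe that, in the absence of positional embeddings, each attention layer produces a per-position output that is a permutation-invariant function of the tokens in its causal context, with the current (query) token singled out. Composing two such layers then yields exactly the nested functional form in the statement, and it is the nesting that implicitly encodes order.

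Concretely, I would first analyze layer one. Under causal self-attention the representation at position $j$ is a softmax-weighted sum of the value vectors at positions $1,\dots,j$, with weights determined by the query of the current token $x_{i_j}$; a position-wise MLP is then applied. Since the softmax aggregation is invariant to permutations of the key--value pairs, the output at position $j$ depends only on the \emph{set} of strictly earlier tokens together with the distinguished current token. This yields layer-one representations
\begin{equation*}
h_1 = f_1(\{x_{i_1}\}), \qquad h_2 = f_2(\{x_{i_1}\}, x_{i_2}), \qquad h_3 = f_3(\{x_{i_1}, x_{i_2}\}, x_{i_3}),
\end{equation*}
for appropriate $f_1, f_2, f_3$ absorbing the query, key, value, and MLP maps.

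Next I would apply the same reasoning to layer two. The prediction for the token at position $4$ is read off from the query at position $3$, which attends over the layer-one outputs $h_1, h_2, h_3$. By the identical permutation-invariance argument this output is a function $g_3$ of the set of earlier representations $\{h_1, h_2\}$ and the current one $h_3$. Substituting the expressions for $h_1, h_2, h_3$ gives
\begin{equation*}
t(x_{i_1} x_{i_2} x_{i_3}) = g_3\!\left( \{ f_1(\{x_{i_1}\}),\, f_2(\{x_{i_1}\}, x_{i_2}) \},\, f_3(\{x_{i_1}, x_{i_2}\}, x_{i_3}) \right),
\end{equation*}
which is precisely the claimed form.

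Finally, I would flag the conceptual payoff and pinpoint the only delicate step. Although each layer is permutation-invariant over its context, the composite is \emph{not} symmetric in $x_{i_1}$ and $x_{i_2}$: the element $f_1(\{x_{i_1}\})$ depends on the first token alone, whereas $f_2(\{x_{i_1}\}, x_{i_2})$ distinguishes the second token from the first, so the set $\{h_1, h_2\}$ records which token occupied position one. This is exactly the symmetry that a single layer could not break (cf.\ Theorem~\ref{prop:icl-pe}), and it explains how depth substitutes for explicit positional embeddings. The main thing to get right is not a computation but the careful justification that, without positional embeddings, softmax attention aggregates its keys and values in a genuinely permutation-invariant way and that causal masking restricts each position's context to strictly earlier tokens; once this is pinned down, the nested form follows by direct substitution.
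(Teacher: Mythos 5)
Your proposal is correct and follows essentially the same route as the paper's proof: unroll the first layer into per-position representations $f_1(\{x_{i_1}\})$, $f_2(\{x_{i_1}\},x_{i_2})$, $f_3(\{x_{i_1},x_{i_2}\},x_{i_3})$ and then apply the second layer's aggregation $g_3$ at position three. You supply more explicit justification (permutation-invariance of softmax attention over key--value pairs plus causal masking) than the paper, which simply asserts the form of the intermediate representations, but the decomposition and conclusion are identical.
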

%\vspace{-2mm}
The proof is in Appendix \ref{app:proof-pe}. Proposition \ref{lem:pe} shows that we generally have $t(x_{i_1} x_{i_2} x_{i_3}) \neq t(x_{i_2} x_{i_1} x_{i_3})$, unlike in the one-layer case. Consequently, high accuracy is achievable without positional embeddings, as shown in Table~\ref{tab:icl-pos}. This result parallels findings in \citet{haviv2022transformer} that autoregressive transformers implicitly encode positions, even without positional embeddings.

\textbf{Roadmap of Section \ref{sec:icl-pos}.} In the rest of this section, we consider settings where each sentence contains repeating patterns. Section \ref{sec:one-pattern} focuses on a simple scenario where training sentences follow the form \textit{abacdc}, where $a \neq b$ and $c \neq d$, or a noisy variation of it. The ICL prompts maintain the same pattern but use different combinations of \textit{ab} and \textit{cd} from those in the training data. Our goal is to understand what types of training data facilitate ICL in clean or noisy scenarios. Section \ref{sec:two-patterns} explores a more realistic case where two possible patterns are present: repeating the first letter (\textit{abca}) and repeating the second letter (\textit{abcb}). 
%\vspace{-2mm}

\begin{table}[t]
\centering
\caption{ICL on single-pattern tasks, averaged over 10 repetitions, achieves near-perfect accuracy in the clean data scenario regardless of architectures and embedding dimension ($d_E$). The one-noisy scenario is the most challenging, with sinusoidal embeddings giving a higher accuracy. In the block-noisy scenario, learned positional embeddings result in significantly better ICL performance.}
%\vspace{2mm}
\begin{tabular}{c c c c c c c}
\toprule
\multicolumn{1}{c}{} & \multicolumn{3}{c}{$d_E$ = 10} & \multicolumn{3}{c}{$d_E$ = 100} \\
\cmidrule(lr){2-4} \cmidrule(lr){5-7}
Pos. emb. & Clean & One-noisy & Block-noisy & Clean & One-noisy & Block-noisy  \\
\midrule
Learned & 0.97 & 0.00 & 0.95 & 1.00 & 0.00 & 1.00\\
Sinusoidal & 0.66 & 0.10 & 0.01 & 0.96  & 0.00 & 0.55\\
RoPE \citep{su2024roformer} & {0.31} & {0.00} & {0.03} & {0.48} & {0.00} & {0.00} \\
\bottomrule
\end{tabular}
\label{tab:icl-one-pattern}
\end{table}

\subsection{In-context learning on single-pattern tasks}
\label{sec:one-pattern}

In this section, we examine the case where the training sentences follow the pattern \textit{abacdc}. To replicate real-world training scenarios, we also analyze how incorporating nuisance tokens into the training sentences affects the ICL capability of autoregressive models. To formalize the discussion, let our vocabulary be $\mathcal{V} \cup \mathcal{N}$, where $\mathcal{N}$ represents the nuisance tokens. We define $S = \{(a,b) \mid a,b \in \mathcal{V}, a \neq b\}$ and partition $S$ into $S_1$ (for training sentences) and $S_2$ (for ICL prompts). This is to ensure that training sentences are distinct from ICL prompts. Furthermore, we assume $\{c[1] \mid c \in S_1\} = \{c[1] \mid c \in S_2\} = \mathcal{V}$, where $c[i]$ is the $i$-th element of $c$. In other words, each token in $\mathcal{V}$ can be the first token in both the training sentences and ICL prompts. We consider three scenarios: 
\begin{enumerate}[leftmargin=*, noitemsep,topsep=0pt]
    \item \textit{Clean}: Training data follow the form \textit{abacdc} where $ab, cd \in S_1$. ICL prompts follow the form \textit{\underline{abacd}} where \textit{\underline{ab}}, \textit{\underline{cd}} $\in S_2$.
    \item \textit{One-noisy}: Training data follow the form \textit{abacdc} where $ab, cd \in S_1$, with one nuisance token $n \in \mathcal{N}$ randomly inserted anywhere except the last position (to ensure ICL prompts do not resemble the training data). ICL prompts follow the form \textit{\underline{abacd}} where \textit{\underline{ab}}, \textit{\underline{cd}} $\in S_2$.
    \item \textit{Block-noisy}: Training data follow the form \textit{abacdc} where $ab, cd \in S_1$, with three consecutive nuisance tokens $n_1, n_2, n_3 \in \mathcal{N}$ randomly inserted while preserving the \textit{aba} and \textit{cdc} blocks. ICL prompts follow the form \textit{\underline{abacdcef}} where \textit{\underline{ab}}, \textit{\underline{cd}}, \textit{\underline{ef}} $\in S_2$.
\end{enumerate}

We set the vocabulary size $|V| = 20$, the number of nuisance tokens $N = 20$, and use only one attention layer as additional layers do not improve performance. See Appendix \ref{app:exp-details} for more details.

\textbf{Results.} Table \ref{tab:icl-one-pattern} reveals interesting phenomena. First, under the clean data scenario, ICL performs exceptionally well, with an observed performance increase with learned positional embeddings and a larger embedding dimension. However, ICL is notably challenging under the one-noisy scenario. In the block-noisy scenario, learned positional embeddings are crucial for satisfactory ICL performance. Theorem \ref{prop:icl-one-pattern} formalizes these findings.
\begin{thm}[Blocked nuisance token structure facilitates ICL]
\label{prop:icl-one-pattern}
    Consider a sufficiently large autoregressive position-aware model that can achieve the minimum possible theoretical loss. Training this model in the one-noisy (block-noisy) scenario results in zero (perfect) ICL accuracy.
\end{thm}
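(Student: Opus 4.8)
The plan is to reduce both halves of the statement to a single computation: since the model is position-aware and attains the minimum cross-entropy loss, at the unique prediction site of each ICL prompt its output must coincide with the conditional law $p(x_t \mid x_{1:t-1})$ induced by the corresponding training process, read off token-agnostically from the $(\text{token},\text{position})$ pattern of the prefix. First I would record the transfer principle that makes this useful: because the noise location is drawn independently of which $S_1$ pair is used, and the first coordinates of $S_1$ and $S_2$ both exhaust $\mathcal{V}$, any rule that a loss-minimizer learns as a function of positions and of the noise-versus-content labelling of the prefix applies verbatim to the held-out $S_2$ test prompt. Everything then comes down to what the training distribution forces at the one site being predicted.

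For the one-noisy scenario the prompt is \emph{abacd} and success means predicting a sixth token equal to the fourth (the copy $c$). First I would check that this target copy relation is entirely absent from training: scanning the admissible insertion sites (anywhere but the last) of the single noise symbol into \emph{abacdc}, the sixth token is always either $d$ or a noise symbol and is never equal to the fourth token. The sharper observation is that conditioning on positions $1$--$5$ being all content forces the lone noise symbol onto position $6$, since it cannot lie among positions $1$--$5$ and cannot be last; hence in training $p(x_6 \in \mathcal{N} \mid x_{1:5}\text{ noise-free}) = 1$. A loss-minimizer therefore encodes the rule ``a noise-free length-five prefix is followed by noise,'' and the clean test prompt is exactly such a prefix, so the model emits a noise token rather than $c$ on every prompt and ICL accuracy is zero.

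For the block-noisy scenario the prompt is \emph{abacdcef} and success means predicting $e$ at position $9$, the copy of position $7$. Here I would exhibit an explicit loss-minimizing position-aware model, reusing the positional-copy construction of Theorem~\ref{prop:icl-pe}: at the block-completion site it reads the two preceding tokens, and if both are content it attends (via the positional embedding) to the token two positions back and copies it, while if they are noise it outputs the noise law. Because the three noise symbols enter as one contiguous block that leaves \emph{aba} and \emph{cdc} intact, in every training sequence whose positions $7,8$ are content (the noise-prefix and noise-middle arrangements) one has $x_9 = x_7$, whereas the noise-suffix arrangement has positions $7,8,9$ all noise; the proposed head thus matches the training conditional everywhere and attains the minimum loss. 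Evaluated on the clean prompt, positions $7,8 = e,f$ are content, the head copies position $7$, and the model outputs $e$ on every prompt, so ICL accuracy is perfect.

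The hard part will be justifying the off-distribution step, since the test prompt is noiseless and uses the held-out pairs $S_2$ while every training sequence is noisy and uses $S_1$. Two points need care. For one-noisy I must rule out a minimizer that secretly distinguishes $S_1$ from $S_2$ and predicts $c$ only on the latter; the clean route is that the position-$6$ output depends on the prefix only through its $(\text{token},\text{position})$ pattern, that the only training pattern with positions $1$--$5$ noise-free (the noise-at-the-penultimate-site arrangement) pins this output to $\mathcal{N}$, and that the test prompt shares that pattern. For block-noisy I must confirm the copy head still fires on the clean prompt even though, in the training sequences where it was exercised, positions $1$--$6$ contained the noise block rather than \emph{abacdc}; this holds precisely because the head reads only the local window at positions $7,8,9$ through positional attention, so the differing earlier context is irrelevant. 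The residual bookkeeping---that the minimum loss is nonzero owing to the entropy of the random noise symbols, and that the copy is genuinely token-agnostic across the $S_1/S_2$ split---is routine.
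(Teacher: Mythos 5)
Your proposal is correct and takes essentially the same route as the paper's proof: characterize the loss-minimizing conditional at the single prediction site (a noise-free length-five prefix forces the lone noise token onto position six in the one-noisy case; position nine copies position seven whenever the latter is a content token in the block-noisy case) and read off the test-time prediction. Your only addition is the explicit discussion of transferring this rule to the held-out $S_2$ prompts, a step the paper's proof leaves implicit.
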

%\vspace{-2mm}
The proof is in Appendix \ref{app:proof-icl-one-pattern}. Theorem \ref{prop:icl-one-pattern} says that ICL works perfectly under the block-noisy scenario, yet fails to work under the one-noisy scenario. However, as shown in Table \ref{tab:icl-one-pattern}, the use of sinusoidal positional embeddings significantly enhances prediction accuracy in the one-noisy scenario. This may be due to the fact that sinusoidal embeddings can encode relative positional information \citep{vaswani2017attention}. For example, training sentences of the form \textit{nabacdc}, where \textit{n} $\in \mathcal{N}$, may help in predicting the most likely token following the ICL prompt \textit{\underline{abacd}}.

%\vspace{-2mm}
\subsection{In-context learning on dual-pattern tasks}
\label{sec:two-patterns}
We next examine the case where training data and ICL prompts contain two different patterns occurring equally likely: \textit{abcadefd} and \textit{abcbdefe}, where $a,b,c$ and $d,e,f$ are distinct. We consider the \textit{clean} and \textit{block-noisy} scenarios as in Section \ref{sec:one-pattern}, and set $|V| = N = 20$ (details in Appendix \ref{app:exp-details}).

\textbf{Results.} Table \ref{tab:icl-two-patterns} outlines the ICL performance for both scenario types across different model configurations. Unlike the single-pattern scenario, there is an improvement in performance with five layers compared to one layer, particularly with learned positional embeddings.

This phenomenon is related to the notion of \textit{induction heads}, where at least two layers may be necessary to distinguish the two patterns \citep{olsson2022in}. This is reflected in Figure \ref{fig:one-vs-five}, which compares the accuracy trajectories of one-layer and five-layer models. While the five-layer setup effectively differentiates the two patterns, the one-layer configuration fails to do so. Meanwhile, in both clean and block-noisy scenarios, learned positional embeddings lead to notably higher accuracies as compared to sinusoidal ones, similar to the single-pattern case.

\begin{figure}[t]
    \centering
    \begin{subfigure}[t]{0.49\textwidth} % Adjust width as needed
        \centering
        \includegraphics[width=0.92\textwidth]{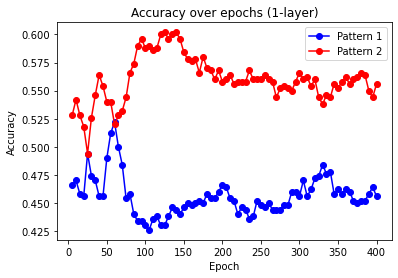}
    \end{subfigure}
    \hfill % This will insert a space between the two figures
    \begin{subfigure}[t]{0.49\textwidth} % Adjust width as needed
        \centering
        \includegraphics[width=0.92\textwidth]{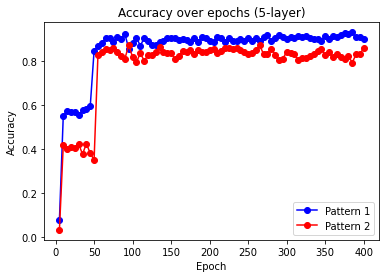}
    \end{subfigure}
    \caption{One-layer models fail to differentiate the two patterns in Section \ref{sec:two-patterns}, as evidenced by the accuracy trajectory graph on the left. On the other hand, five-layer models are capable of doing so.}
    \label{fig:one-vs-five}
    %\vspace{-5mm}
\end{figure}
%\vspace{-4mm}
\section{Scenarios where in-context learning fails}
\label{sec:failed}
%\vspace{-2mm}
In this section, we consider two scenarios where in-context learning (ICL) fails, irrespective of architectures. In Section \ref{sec:failed-1}, we consider a logic reasoning task requiring identification and generalization of a repetition meta-pattern within sequences.  In Section \ref{sec:failed-2}, we explore a word analogy task where relevant word pairs appear in unstructured training sentences but are limited to fixed positions. Section \ref{sec:failed-synth} concludes with a synthetic data experiment supporting the theory.

\begin{table}[t]
\centering
\caption{ICL on dual-pattern tasks, averaged over 10 repetitions, achieves notably better accuracy using learned than sinusoidal embeddings. Near-perfect accuracy is attained in the clean scenario by a 5-layer transformer with an embedding dimension ($d_E$) of 100 and learned positional embeddings. The block-noisy scenario is challenging; the same model attains the best performance.}
%\vspace{2mm}
\begin{tabular}{l c c c c c}
\toprule
& \multicolumn{1}{c}{} & \multicolumn{2}{c}{$d_E$ = 10} & \multicolumn{2}{c}{$d_E$ = 100} \\
\cmidrule(lr){3-4} \cmidrule(lr){5-6}
& Pos. emb. & Clean & Block-noisy & Clean & Block-noisy \\
\midrule
\multirow{2}{*}{1-layer} & Learned & (0.33, 0.33) & (0.15, 0.16) & (0.51, 0.49) & (0.49, 0.50) \\
& Sinusoidal & (0.12, 0.66) & (0.03, 0.03) & (0.51, 0.48) & (0.06, 0.10) \\
\midrule
\multirow{2}{*}{5-layer} & Learned & (0.39, 0.39) & (0.23, 0.22) & (0.97, 0.98) & (0.87, 0.70) \\
& Sinusoidal & (0.32, 0.34) & (0.04, 0.04) & (0.83, 0.82) & (0.04, 0.07) \\
\bottomrule
\end{tabular}
\label{tab:icl-two-patterns}
\end{table}

%\vspace{-2mm}
\subsection{Failed scenario 1: Sentences with repeating patterns}
\label{sec:failed-1}

In this meta-pattern recognition and generalization task, each training sequence follows a repeating pattern based on its starting tokens, and the ICL task sequence requires the model to identify this repetition and extend it to a new, unseen starting pattern. 
Specifically, our training data comprises sentences in the form of \textit{abacdcefe}, where $a \neq b$, $c \neq d$, and $e \neq f$. Note that each sentence is structured into three blocks, each consisting of three tokens with the same pattern. For the ICL task, we consider predicting $\underline{f}$ from the prompt $\underline{abbcddef}$, where $\underline{a} \neq \underline{b}$, $\underline{c} \neq \underline{d}$, and $\underline{e} \neq \underline{f}$. Given the repeated pattern within each training sequence, a well-trained model might be expected to output $\underline{f}$ to continue the pattern established in the in-context examples: $\underline{a} \underline{b} \underline{b}$ and $\underline{c} \underline{d} \underline{d}$. However, as seen in Table \ref{tab:icl-fail}, all models fail to recognize and apply the pattern, resulting in incorrect predictions.

\textbf{Formalization.} We now formalize a generalization of this scenario. Let the vocabulary be $\mathcal{V} = \{1,2,\cdots,|V|\}$, and define $S = \{(a,b) \mid a,b \in \mathcal{V}, a \neq b\}$.  To ensure training sentences are distinct from the ICL prompts, we first partition $S$ into $S_1$ and $S_2$, where $\{c[1] \mid c \in S_1\} = \{c[1] \mid c \in S_2\} = \mathcal{V}$. Here, $c[i]$ denotes the $i$-th element of $c$. Suppose we autoregressively train a sufficiently large position-aware model so that it is possible to achieve the minimum possible theoretical loss. The training sentences take the form $x_{11}x_{12}x_{11}x_{21}x_{22}x_{21} \cdots x_{N1}x_{N2}x_{N1}$, where $x_{i1} \neq x_{i2}$ and $(x_{i1}, x_{i2})$ is independently selected from $S_1$ for every $i \in [N]$. Theorem \ref{prop:failed-1}, whose proof is in Appendix \ref{app:proof-failed-1}, states that ICL fails regardless of the number of in-context examples. 

\begin{thm}[Failure of ICL: Different repeated patterns]
    \label{prop:failed-1}
    Consider the generalized scenario in Section \ref{sec:failed-1}. 
    For any $1 \leq \ell \leq N$, given an in-context prompt of the form  $\underline{x_{11}x_{12}x_{12}x_{21}x_{22}x_{22}} \cdots \underline{x_{\ell1}x_{\ell2}}$ where $\underline{x_{i1}} \neq \underline{x_{i2}}$ and $(\underline{x_{i1}}, \underline{x_{i2}}) \in S_2$ for every $i \in [\ell]$, the model predicts $\underline{x_{\ell1}}$ instead of $\underline{x_{\ell2}}$.
\end{thm}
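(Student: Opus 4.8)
The plan is to first pin down what a loss-minimizing position-aware model must compute at each position of a training sentence, and then track how this forced behavior plays out on the out-of-pattern prompt. Since the model is autoregressive and large enough to attain the minimum theoretical loss, at every position $t$ it must reproduce the true conditional $p(x_t \mid x_{1:t-1})$ under the training distribution. The position of interest is $t = 3\ell$, the third slot of the final (currently incomplete) block. In training each block is $x_{k1} x_{k2} x_{k1}$ with $(x_{k1},x_{k2})$ drawn i.i.d.\ from $S_1$, so after conditioning on the first $3k-1$ tokens the third token of block $k$ is deterministically $x_{k1}$. Hence the training conditional at position $3k$ is a point mass on the token occupying position $3k-2$, and, because the blocks are independent and the target never depends on $x_{k2}$, this point mass is insensitive both to the earlier blocks and to the second token of the current block.

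Next I would record the consequence for the learned function: the minimum-loss, position-aware model implements at position $3k$ the positional copy rule ``output the token at position $3k-2$.'' Here the partition hypothesis $\{c[1] \mid c \in S_1\} = \mathcal{V}$ is essential, since it guarantees that every token of $\mathcal{V}$ appears as a first coordinate during training, so the loss constrains the copy rule for \emph{every} possible value of the first token rather than a subset. Because the training target at position $3k$ is invariant to the second token, the canonical loss-minimizing solution representable by the architecture --- the same positional copy exhibited for $\tilde f$ in Theorem~\ref{prop:icl-pe} --- depends only on the token at position $3k-2$ and on the position index $3k$ itself.

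I would then apply this rule to the prompt $\underline{x_{11}x_{12}x_{12}} \cdots \underline{x_{\ell1}x_{\ell2}}$. Its final block occupies positions $3\ell-2, 3\ell-1$ with tokens $\underline{x_{\ell1}}, \underline{x_{\ell2}}$, and the query is the token at position $3\ell$. Although the pair $(\underline{x_{\ell1}}, \underline{x_{\ell2}}) \in S_2$ never occurs in training (as $S_1$ and $S_2$ partition $S$), and the earlier prompt blocks follow the opposite ``repeat-the-second'' pattern, neither discrepancy can affect the prediction: the learned rule at position $3\ell$ reads off only the token at position $3\ell-2$, which is $\underline{x_{\ell1}}$, and is blind to both the second token and the earlier blocks. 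The model therefore places all its mass on $\underline{x_{\ell1}}$, hence prefers it over $\underline{x_{\ell2}}$, uniformly over $1 \le \ell \le N$ --- exactly the claimed ICL failure, and the precise sense in which the model ignores the in-context examples.

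The main obstacle is the step that converts ``zero loss on the training distribution'' into a \emph{determined} prediction on a prompt that is out of distribution: minimizing loss only constrains the conditionals on contexts of positive training probability, so a priori the behavior on $S_2$ pairs is unconstrained. The work is to argue that the position-aware architecture forces the loss-minimizing conditional at position $3k$ to factor through the single token at position $3k-2$, i.e.\ that the attainable minimizer is the token-agnostic positional copy rather than some rule that secretly inspects $x_{k2}$ or the earlier blocks. I would settle this by exhibiting the copy map as the attainable minimizer (mirroring the construction in Theorem~\ref{prop:icl-pe}) and noting that, since the target is a function of position $3\ell-2$ alone across all $S_1$ inputs, the fitted rule must agree with the positional copy, whose value on the $S_2$ input $\underline{x_{\ell1}}$ is then forced.
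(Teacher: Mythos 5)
Your proposal is correct and follows essentially the same route as the paper's proof: decompose the autoregressive loss by position, use the independence of the blocks to reduce to the three within-block positions, observe that the third-position loss is driven to zero by deterministically copying the token two positions back, and note that the prompt queries exactly such a position, forcing the output $\underline{x_{\ell1}}$. Your added care about extending the copy rule to the out-of-distribution $S_2$ pairs via the covering condition $\{c[1]\mid c\in S_1\}=\mathcal{V}$ is the same point the paper invokes, just spelled out more explicitly.
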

%\vspace{-2mm}
\textbf{Results.} Theorem \ref{prop:failed-1} and Table \ref{tab:icl-fail} demonstrate that ICL achieves zero accuracy irrespective of the number of in-context examples ($\ell - 1$). This insight sheds light on the ICL capacity of autoregressive models. Simply put, if the pattern in the in-context examples differs significantly from any pattern in the training data, ICL may not occur. These results align with the findings of \citet{raventos2023pretraining} and \citet{yadlowsky2023pretraining} on the importance of data diversity for ICL.

%\vspace{-2mm}
\subsection{Failed scenario 2: Sentences with co-occurring word pairs restricted to fixed locations}
\label{sec:failed-2}

\begin{table}[t]
\centering
\caption{ICL in failed scenarios, averaged over 10 repetitions, achieves zero accuracy for any architecture and embedding dimension ($d_E$).}
\begin{tabular}{l c c c c c}
\toprule
& \multicolumn{1}{c}{} & \multicolumn{2}{c}{Failed scenario 1} & \multicolumn{2}{c}{Failed scenario 2} \\
\cmidrule(lr){3-4} \cmidrule(lr){5-6}
& Pos. emb. & $d_E$ = 10 & $d_E$ = 100 & $d_E$ = 10 & $d_E$ = 100 \\
\midrule
\multirow{2}{*}{1-layer} & Learned & 0.00 & 0.00 & 0.01 & 0.00 \\
& Sinusoidal & 0.01 & 0.00 & 0.00 & 0.00 \\
\midrule
\multirow{2}{*}{5-layer} & Learned & 0.00 & 0.00 & 0.00 & 0.00 \\
& Sinusoidal & 0.00 & 0.00 & 0.00 & 0.00 \\
\bottomrule
\end{tabular}
\label{tab:icl-fail}
\end{table}
We revisit the word analogy task in Section \ref{sec:co-occ}.  The training data now comprises sentences of the form of $a_i pqrs b_i$, where $(a_i, b_i)$ represents a frequently co-occurring word pair and $p,q,r,s$ represent other words. For the ICL task, we consider predicting $b_{i_3}$ from the prompt $a_{i_1} b_{i_1} a_{i_2} b_{i_2} a_{i_3}$, where $i_1, i_2, i_3$ are distinct. As each training sentence always contains an $(a_i, b_i)$ pair at a fixed location, we expect a well-trained model to output $b_{i_3}$ to maintain the pattern in in-context examples: $a_{i_1} b_{i_1}$ and $a_{i_2} b_{i_2}$. Yet Table \ref{tab:icl-fail} shows none of the models can identify the patterns and predict the correct token.

\textbf{Formalization.} We now formalize a generalization of this scenario. Let the vocabulary be $\{(a_i, b_i)\}_{i \in [I]} \cup \mathcal{V}$, where $\mathcal{V} = \{1,2,\cdots,|V|\}$ represent other words. As in Section \ref{sec:failed-1}, we autoregressively train a sufficiently large position-aware model that can achieve the minimum possible theoretical loss. The training sentences take the form 
$a_{i} v_1 v_2 \cdots v_{2k} b_{i}$, where $i$ and $v_{1:2k}$ are independently chosen from $[I]$ and $\mathcal{V}$, respectively, uniformly at random. Theorem \ref{prop:failed-2}, whose proof is in Appendix \ref{app:proof-failed-2}, states that ICL fails regardless of the number of in-context examples. 

\begin{thm}[Failure of ICL: Different pattern structures]
    \label{prop:failed-2}
    Consider the generalized scenario in Section \ref{sec:failed-2}. For any $1 \leq \ell \leq k + 1$, given an in-context prompt of the form $a_{i_1}b_{i_1}a_{i_2}b_{i_2} \cdots a_{i_\ell}$ with distinct $i_j$'s, the model never predicts $b_{i_\ell}$: it predicts a uniform probability vector over $\mathcal{V}$ when $1 \leq \ell \leq k$, and $b_{i_1}$ when $\ell = k+1$.
\end{thm}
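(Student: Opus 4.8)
The plan is to reduce the statement to a claim about the model's output distribution at a single position, and then to split into two regimes according to where that position falls inside a training sentence. The prompt $a_{i_1}b_{i_1}\cdots a_{i_\ell}$ has length $2\ell-1$, so predicting the next token amounts to reading off the model's distribution at position $2\ell$. Since $1\le\ell\le k+1$, this position satisfies $2\le 2\ell\le 2k+2$, and it is either an \emph{interior} position ($\ell\le k$, so $2\ell\le 2k$) or the \emph{final} position ($\ell=k+1$, so $2\ell=2k+2$). In either regime it suffices to show that $b_{i_\ell}$ is not the arg-max of the output distribution at position $2\ell$, since the model's prediction is its most likely token.

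\textbf{Interior case ($\ell\le k$).} Here I would use that, in every training sentence $a_i v_1\cdots v_{2k}b_i$, the token at position $2\ell$ is one of the $v_j$'s, drawn i.i.d.\ uniformly from $\mathcal{V}$ and independent of the preceding tokens. Hence the conditional law of the token at position $2\ell$ given any in-distribution prefix is exactly uniform on $\mathcal{V}$, so any minimum-loss model places zero mass on every paired token $b_j$ at position $2\ell$, in particular on $b_{i_\ell}$. Thus the $\mathcal{V}$-tokens carry all the mass and $b_{i_\ell}$ cannot be the prediction.

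\textbf{Final case ($\ell=k+1$).} Here the token at position $2k+2$ is, in every training sentence, the pair-partner $b_j$ of the position-$1$ token $a_j$, and it is a deterministic function of $a_j$ alone; the interior tokens $v_1,\dots,v_{2k}$ are independent noise that the loss forces the model to disregard. Consequently the minimum-loss prediction at the final position depends only on the position-$1$ token and returns its partner. On the prompt the position-$1$ token is $a_{i_1}$, so the model returns $b_{i_1}$; since the $i_j$ are distinct and $i\mapsto b_i$ is injective, $b_{i_1}\neq b_{i_\ell}$, and $b_{i_\ell}$ is again not predicted.

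\textbf{Main obstacle.} The delicate point is that the prompt is \emph{out of the training distribution}: positions $2,\dots,2\ell-1$ hold $a$- and $b$-tokens, whereas in training those interior slots only ever hold $\mathcal{V}$-tokens. The minimum-loss hypothesis therefore pins the model down only on in-distribution prefixes, not on the prompt itself, so I cannot simply read off the conditional laws above. To close this gap I would invoke the position-aware attention structure (in the spirit of \Cref{lem:pe}): the output at each position factors through a permutation-invariant aggregation of per-position (token, position) contributions, and I would argue that (i) the interior read-out never assigns a $b$-token the top logit, and (ii) at the final position the partner map is keyed on position $1$, so the contributions of the out-of-distribution interior tokens cannot lift $b_{i_\ell}$ above $b_{i_1}$. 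Turning these structural statements into a rigorous transfer from the support to the out-of-distribution prompt, for \emph{every} representable minimum-loss model rather than a single hand-built one, is where the real work lies.
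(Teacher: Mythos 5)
Your proposal takes essentially the same route as the paper's proof: decompose the autoregressive loss position by position, observe that the minimum-loss prediction at any interior position $2\le g\le 2k+1$ is uniform over $\mathcal{V}$ (hence never a $b$-token) and at the final position $2k+2$ is the partner $b_{i_1}$ of the position-one token (hence not $b_{i_\ell}$ by distinctness of the $i_j$), and split on whether $\ell\le k$ or $\ell=k+1$. The out-of-distribution transfer issue you flag as the main obstacle is genuine, but the paper's own proof does not resolve it either — it simply asserts that the position-wise minimizers dictate the model's output on the prompt — so your argument is no less complete than the published one.
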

%\vspace{-2mm}
\textbf{Results.} Theorem \ref{prop:failed-2} highlights the finding that the success of ICL relies heavily on how the patterns appear in the training data. In this scenario, the $(a_i, b_i)$ pairs consistently appear at the beginning and end of each training sentence, and we anticipate the model to recognize this relationship for ICL to occur. However, as shown in Theorem \ref{prop:failed-2} and Table \ref{tab:icl-fail}, this is not the case.

%\vspace{-2mm}
\subsection{Experiment on a synthetic corpus}
\label{sec:failed-synth}

We conduct an experiment on a synthetic corpus featuring \textit{(country)-(capital)} relationships. Each sentence falls into one of four categories: (1) exactly one country-capital pair, (2) exactly two country-capital pairs, (3) a single country without a pair, and (4) no country. In sentences with one country-capital pair, the capital appears in the first position, the country in the last, and each sentence contains six words (as in Section \ref{sec:failed-2}). The corpus generation process is detailed in Appendix \ref{app:exp-details}. 

We train a five-layer two-head autoregressive transformer on this corpus, with an embedding dimension of $100$. Similar to Section \ref{sec:synth-exp}, we evaluate the ICL accuracies using prompts involving countries and their capitals. The results show zero ICL accuracy across varying in-context examples (one to five), supporting our theory.

\section{Discussion}
This paper examines how in-context learning (ICL) arises from pre-training on unstructured language data, with three key findings: (1) ICL for word analogy tasks can emerge from simple co-occurrence modeling, using models like continuous bag of words (CBOW) without positional encoding or attention; (2) positional information and structured nuisance tokens are essential for ICL in logic reasoning tasks that require recognizing rare patterns and generalizing to new tokens; and (3) the structure of training data significantly impacts ICL effectiveness.

\section{Limitations and future work}

This study has several limitations. Firstly, the experiments are conducted on a relatively small scale. However, they still provide sufficient evidence to support the theoretical findings. Secondly, the focus of this study is on specific types of in-context learning (ICL) tasks, as described in Section \ref{sec:intro}. Thirdly, the pre-training data considered in this work may not match the valid grammatical sentences that language models are usually trained on. Nonetheless, our co-occurrence results still apply to grammatical sentences, as the co-occurring pairs can appear naturally within them (e.g., "Beijing is the capital of China," or "the city of Beijing is located in China"). Lastly, real data sets are not utilized due to the lack of alignment with the study objectives. 

Despite these limitations, we believe this work provides valuable understanding of the key factors enabling ICL to occur from training on unstructured natural language data, supported by both theoretical and empirical evidence from experiments involving prompting and synthetic data. Further analyses on other ICL tasks and their reliance on model architecture can be fruitful avenues for future work.

\vspace{20pt}

\textbf{Acknowledgements. } This work was supported in part by the Office of Naval Research under
grant number N00014-23-1-2590, the National Science Foundation under
grant numbers 2231174 and 2310831, No. 2428059, and a Michigan Institute
for Data Science Propelling Original Data Science (PODS) grant.

\clearpage
%\putbib[attention]

%\end{bibunit}

%\bibliographystyle{alp}
\bibliography{icl}

\clearpage
\appendix

% !TeX root = ./main.tex

\begin{center}
\textbf{\Large Supplementary Material}
\end{center}
\section{Related work}
\label{app:rel-work}

Large language models (LLMs), such as transformers, are widely recognized for their outstanding performance in in-context learning (ICL) \citep{brown2020language}. ICL refers to the capability of LLMs to discern specific tasks and generate predictions based on prompt exemplars without needing any parameter updates. A multitude of studies have been dedicated to exploring this intriguing phenomenon from various theoretical and empirical perspectives. In this section, we provide a brief summary of some of these studies.

Some studies adopted a Bayesian approach to studying ICL. \citet{xie2021an} posited that ICL can be viewed as implicit Bayesian inference. They demonstrated that LLMs can infer a latent document-level concept for next-token prediction during pre-training and a shared latent concept across input-output pairs in an ICL prompt, under the assumption that documents are generated from hidden Markov models (HMMs). \citet{wang2023large} and \citet{zhang2023what} expanded on this idea by exploring more realistic latent variable models beyond HMMs. \citet{wang2023large} argued that large language models function as latent variable models, with latent variables containing task-related information being implicitly inferred. \citet{zhang2023what} showed that without updating the neural network parameters, ICL can be interpreted as Bayesian model averaging parameterized by the attention mechanism. \citet{ahuja2023in} provided empirical evidence that transformers behave like Bayesian predictors when performing ICL with linear and non-linear function classes. \citet{dalal2024the} proposed a Bayesian learning framework to understand ICL through the lens of text generation models represented by multinomial transition probability matrices. \citet{chiang2024understanding} proposed the pelican soup framework to explain ICL without relying on latent variable models. This framework incorporates concepts such as a common sense knowledge base, natural language classification, and meaning association, enabling the establishment of a loss bound for ICL that depends on the number of in-context examples.

\citet{garg2022what} formulated ICL as learning a specific function class $\mathcal{F}$ from prompts of the form $\left(x_1, f(x_1), \ldots, x_n, f(x_n), x_{n+1}\right)$ and their corresponding responses $f(x_{n+1})$. Here, $f \in \mathcal{F}$, where $\mathcal{F}$ is a function class. In this context, ICL refers to the capability of a transformer to output a number close to $g(y_{n+1})$ given a prompt of the form $\left(y_1, g(y_1), \ldots, y_n, g(x_n), y_{n+1}\right)$, where $g \in \mathcal{F}$. Many studies adopted this regression formulation of ICL, with some linking ICL to gradient descent. \citet{akyurek2022learning, oswald2023transformers}, and \citet{dai2023why} proved that transformers are capable of implementing gradient descent, which results in their ICL ability. \citet{bai2023transformers} established generalization bounds for ICL and proved that transformers can perform algorithm selection like statisticians. \citet{zhang2024trained} showed that the gradient flow dynamics of transformers converge to a global minimum that enables ICL. \citet{huang2023in} investigated the learning dynamics of single-layer softmax transformers trained via gradient descent to perform ICL on linear functions. \citet{ahn2024transformers} explored the optimization landscape of transformers and proved that the optimal parameters coincide with an iteration of preconditioned~gradient~descent. 

In a related exploration, \citet{li2023the} showed that softmax regression models learned through gradient descent are similar to transformers. \citet{ren2023in} related ICL with softmax transformers to contrastive learning, where the inference process of ICL can be viewed as a form of gradient descent. \citet{mahankali2023one} proved that minimizing the pre-training loss is equivalent to a step of gradient descent in single-layer linear transformers. \citet{vladymyrov2024linear} established that linear transformers execute a variant of preconditioned gradient descent by maintaining implicit linear models. On the other hand, some studies argued that the ICL ability of transformers cannot be attributed to gradient descent. \citet{fu2023transformers} showed that ICL for linear regression tasks arises from higher-order optimization techniques like iterative Newton's method rather than gradient descent. \citet{wibisono2023on} demonstrated that transformers can perform ICL on unstructured data whose prompt exemplars lack explicit pairings, with softmax attention playing an important role especially when using a single attention layer. \citet{shen2023do} provided empirical evidence that the equivalence between gradient descent and ICL might not be applicable in real-world scenarios. \textit{In contrast to these studies, our work provides a connection between ICL and classical language models like continuous bag of words (CBOW). Specifically, we show that ICL 
for word analogy tasks with semantically related word pairs can arise by modeling co-occurrence patterns via CBOW.}

Numerous studies focused on the pre-training aspects (e.g., data distribution and task diversity) of ICL. \citet{min2022rethinking} showed that the input-label mapping in the in-context examples does not significantly affect ICL performance. \citet{chan2022data} demonstrated that the ICL capabilities of transformers depend on the training data distributions and model features. \citet{kossen2023in} established that ICL considers in-context label information and is capable of learning entirely new tasks in-context. \citet{li2023finding} introduced an iterative algorithm designed to enhance ICL performance by selecting a small set of informative examples that effectively characterize the ICL task. \citet{qin2023in} proposed a method based on zero-shot chain-of-thought reasoning for selecting ICL examples, emphasizing the importance of choosing diverse examples that are strongly correlated with the test sample. \citet{han2023understanding} studied ICL by identifying a small subset of the pre-training data that support ICL via gradient-based methods. They discovered that this supportive pre-training data typically consist of more uncommon tokens and challenging examples, characterized by a small information gain from long-range context. \citet{peng2024revisiting} proposed a selection method for ICL demonstrations that are both data-dependent and model-dependent. \citet{van2024in} introduced a demonstration selection method that enhances ICL performance by analyzing the influences of training samples using influence functions.

In a similar vein, \citet{wu2023how} demonstrated that pre-training single-layer linear attention models for ICL on linear regression with a Gaussian prior can be effectively accomplished with a minimal number of independent tasks, regardless of task dimension. \citet{raventos2023pretraining} emphasized a task diversity threshold that differentiates the conditions under which transformers can successfully address unseen tasks. \citet{yadlowsky2023pretraining} attributed the impressive ICL capabilities of transformers to the diversity and range of data mixtures in their pre-training, rather than their inductive biases for generalizing to new tasks. \citet{ding2023causallm} compared the ICL performance of transformers trained with prefixLM (where in-context samples can attend to all tokens) versus causalLM (where in-context samples cannot attend to subsequent tokens), finding that the latter resulted in poorer ICL performance. \citet{chen2024parallel} discovered that the ICL capabilities of language models rely on the presence of pairs of phrases with similar structures within the same sentence. \citet{zhao2024noisyicl} proposed a calibration scheme that modifies model parameters by adding random noises, resulting in fairer and more confident predictions. \citet{abbas2024enhancing} demonstrated that the ICL predictions from transformer-based models often exhibit low confidence, as indicated by high Shannon entropy. To address this issue, they introduced a straightforward method that linearly calibrates output probabilities, independent of the model's weights or architecture. \textit{Similar to these works, our work highlights  the importance of co-occurrence, positional information, and training data structure for ICL to arise.}

Other studies analyzed ICL from a learning theory perspective. \citet{hahn202a} proposed an information-theoretic bound that explains how ICL emerges from next-token prediction. \citet{wies2023the} derived a PAC-type framework for ICL and finite-sample complexity results. \citet{jeon2024an} introduced a novel information-theoretic view of meta-learning (including ICL), allowing for the decomposition of errors into three components. They proved that in ICL, the errors decrease as the number of examples or sequence length increase. Other studies focus on the mechanistic interpretability component of ICL. \citet{olsson2022in} argued that transformers can develop induction heads that are able to complete token sequences such as [A][B] $\cdots$ [A] $\rightarrow$ [B], leading to impressive ICL performance. \citet{bietti2023birth} examined a setup where tokens are generated from either global or context-specific bigram distributions to distinguish between global and in-context learning. They found that global learning occurs rapidly, while in-context learning is achieved gradually through the development of an induction head. \citet{ren2024identifying} identified semantic induction heads that increase the output logits of tail tokens when attending to head tokens, providing evidence that these heads could play a vital role in the emergence of ICL. \citet{yu2024how} showed that the ICL ability of transformers arises from the utilization of in-context heads, where each query and key matrix collaborate to learn the similarity between the input text and each demonstration example.

A number of works delved into specific data generating processes to provide insight into the emergence of ICL. \citet{bhattamishra2023understanding} examined the ICL ability of transformers by focusing on discrete functions. Specifically, they showed that transformers perform well on simpler tasks, struggle with more complex tasks, and can learn more efficiently when provided with examples that uniquely identify a task. \citet{guo2023how} investigated ICL in scenarios where each label is influenced by the input through a potentially complex yet constant representation function, coupled with a unique linear function for each instance. \citet{akyurek2024in} studied ICL of regular languages produced by random finite automata. They compared numerous neural sequence models and demonstrated that transformers significantly outperform RNN-based models because of their ability to develop \textit{n-gram heads}, which are a generalization of \textit{induction heads}. \citet{sander2024how} analyzed simple first-order autoregressive processes to gain insight into how transformers perform ICL to predict the next tokens. \textit{On the other hand, our work focuses on data generating processes containing semantically related word pairs and repeated token patterns to better understand several components that are crucial for ICL to occur from training on unstructured data.}

Some studies explored how different components of transformers affect their ICL abilities. \citet{ahuja2023a} compared the ICL performance of transformers and MLP-based architectures under distribution shifts. Their findings demonstrate that while both methods perform well in in-distribution ICL, transformers exhibit superior ICL performance when faced with mild distribution shifts. \citet{collins2024in} showed that softmax attention outperforms linear attention in ICL due to its ability to calibrate its attention window to the Lipschitzness of the pre-training tasks. \citet{xing2024benefits} focused on linear regression tasks to identify transformer components that enable ICL. They found that positional encoding is crucial, along with the use of multiple heads, multiple layers, and larger input dimensions. \citet{cui2024superiority} proved that multi-head attention outperforms single-head attention in various practical scenarios, including those with noisy labels and correlated features. \citet{chen2024training} investigated the ICL dynamics of a multi-head softmax attention model applied to multi-task linear regression. They proved the convergence of the gradient flow and observed the emergence of a \textit{task allocation} phenomenon, where each attention head specializes in a specific task.

Finally, several studies proposed various hypotheses on the emergence of ICL and provided theoretical justifications. \citet{swaminathan2023schema} introduced clone-structured causal graphs (CSCGs) to explain how ICL can generalize to unseen sentences via a mechanism called rebinding. \citet{li2023transformers} viewed ICL as an algorithm learning problem where a transformer implicitly constructs a hypothesis function at inference time. \citet{han2023explaining} argued that the ability of transformers  to execute ICL is attributable to their capacity to simulate kernel regression. \citet{singh2023the} explored the interaction between ICL and in-weights learning (IWL) using synthetic data designed to support both processes. They observed that ICL initially emerges, followed by a transient phase where it disappears and gives rise to IWL. \citet{yan2023understanding} studied ICL from the perspective that token co-occurrences play a crucial role in guiding the learning of surface patterns that facilitates ICL. \citet{abernethy2024a} showed that transformers can execute ICL by dividing a prompt into examples and labels, then employing sparse linear regression to deduce input-output relationships and generate predictions. \citet{lin2024dual} developed a probabilistic model that can simultaneously explain both task learning and task retrieval aspects of ICL. Here, task learning refers to the ability of language models to identify a task from in-context examples, while task retrieval pertains to their ability to locate the relevant task within the pre-training data.

\newpage

\section{Proof of Theorem \ref{prop:icl-cbow}}
\label{app:proof-1}
\begin{proof}
Let $|V| = 2K + L$ denote the vocabulary size. Consider a sentence $X$ represented by its one-hot encoding (i.e., $X \in \{0,1\}^{|V| \times S}$). For every position $i \in [S]$, the loss for predicting the word in the $i$-th position given all the other words is given by
$|| AX(\mathds{1}_S - e_i) - X e_i||_2^2,$
where $A = \frac{U^\top V}{S-1} \in \mathbb{R}^{|V| \times |V|}$ and $e_i \in \mathbb{R}^S$ is a zero vector with $1$ on its $i$-th entry. Here, $U$ ($V$) is a matrix consisting of the center (context) embeddings of all tokens, and $A$ is a matrix summarizing the similarity between each pair of words (one as a center word and the other as a context word). Our objective is to find $A$ that minimizes the sum of losses for each position in each sentence. Lemma \ref{lem:1} gives a closed-form expression of the minimizer.

\begin{lemma}
\label{lem:1}
    The minimizer of the overall loss is given by $A = B \left((S-2)B + C \right)^{-1}$.
    Here, $B$ is a matrix whose $(i,j)$-th entry is $p(i,j)$, the probability that for a given (center, context) pair, the center is $i \in |V|$ and the context is $j \in |V|$. Moreover, $C$ is a diagonal matrix whose $i$-th diagonal entry is $p(i) = \sum_{j \in |V|} p(i, j)$. 
\end{lemma}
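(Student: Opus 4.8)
The plan is to recognize the overall objective as an unconstrained least-squares problem in the single matrix $A$ and solve it through its normal equations. Writing $s_i := X(\mathds{1}_S - e_i)$ for the context-sum vector at position $i$ and $y_i := Xe_i$ for the target one-hot, the per-position loss is exactly $\|As_i - y_i\|_2^2$, so with infinitely many training sentences the objective becomes, up to a positive constant, the expectation $\mathcal{L}(A) = \mathbb{E}_X \sum_{i=1}^S \|As_i - y_i\|_2^2$. Because the embedding dimension is large enough, $A = U^\top V/(S-1)$ ranges over all of $\mathbb{R}^{|V|\times|V|}$, so I minimize over arbitrary $A$. Since $\mathcal{L}$ is convex and quadratic in $A$, setting $\nabla_A \mathcal{L} = 0$ yields $A\,\big(\mathbb{E}\sum_i s_i s_i^\top\big) = \mathbb{E}\sum_i y_i s_i^\top$, hence $A = M_1 M_2^{-1}$ with $M_1 := \mathbb{E}\sum_i y_i s_i^\top$ and $M_2 := \mathbb{E}\sum_i s_i s_i^\top$, provided $M_2$ is invertible.

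The remaining work is to evaluate $M_1$ and $M_2$ by counting co-occurrences. For $M_1$, I expand $y_i s_i^\top = \sum_{j\neq i}(Xe_i)(Xe_j)^\top$, whose $(k,l)$ entry is the indicator of $x_i=k,\,x_j=l$. Summing over $i$ and over $j\neq i$ and taking the expectation counts the expected number of ordered pairs of distinct positions $(i,j)$ carrying center $k$ and context $l$; since there are $S(S-1)$ such ordered pairs and $p(k,l)$ is the per-pair probability, this gives $M_1 = S(S-1)B$.

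For $M_2$, I expand $s_i s_i^\top = \sum_{j\neq i}\sum_{j'\neq i}(Xe_j)(Xe_{j'})^\top$ and split into the diagonal terms $j=j'$ and the cross terms $j\neq j'$. The $j=j'$ part is diagonal: each $(Xe_j)(Xe_j)^\top$ is reused once for each of the $S-1$ admissible centers $i\neq j$, so in expectation this self-term is $(S-1)\sum_j \mathbb{E}[(Xe_j)(Xe_j)^\top] = S(S-1)C$, using $\sum_j P(x_j=k) = S\,p(k)$. The cross part counts, for each ordered pair of distinct context positions $(j,j')$, the $S-2$ admissible centers $i\notin\{j,j'\}$, giving $(S-2)\sum_{j\neq j'}\mathbb{E}[(Xe_j)(Xe_{j'})^\top] = S(S-1)(S-2)B$. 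Adding the two pieces yields $M_2 = S(S-1)\big((S-2)B + C\big)$.

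Combining the two computations, the common factor $S(S-1)$ cancels and $A = M_1 M_2^{-1} = B\big((S-2)B+C\big)^{-1}$, as claimed; invertibility of $(S-2)B+C$ holds because $M_2$ is a sum of outer products and is positive definite under the data distribution at hand. I expect the main obstacle to be the combinatorial bookkeeping in $M_2$: correctly separating the self- and cross-terms and tracking the multiplicities $S-1$ and $S-2$ with which each context token (respectively context pair) is reused across admissible center positions, since an off-by-one error there would change the coefficients in the final formula.
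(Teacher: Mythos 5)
Your proposal is correct and follows essentially the same route as the paper: both derive the normal equations $A\,\mathbb{E}\big[\sum_i s_i s_i^\top\big] = \mathbb{E}\big[\sum_i y_i s_i^\top\big]$ and evaluate the two moment matrices as $S(S-1)\big((S-2)B+C\big)$ and $S(S-1)B$ respectively; your self-/cross-term bookkeeping is exactly the combinatorial content of the paper's identities $\sum_i(\mathds{1}_S-e_i)(\mathds{1}_S-e_i)^\top=(S-2)\mathds{1}_{S\times S}+\mathbb{I}_{S\times S}$ and $\sum_i e_i(\mathds{1}_S-e_i)^\top=\mathds{1}_{S\times S}-\mathbb{I}_{S\times S}$. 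The only cosmetic difference is that the paper works with a finite corpus and passes to the limit via the law of large numbers, whereas you take expectations directly.
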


\begin{proof}
    Let $\mathcal{L}(X) = \sum_{i=1}^S || AX(\mathds{1}_S - e_i) - X e_i||_2^2$ denote the sum of the losses corresponding to all tokens in sentence $X$. By direct calculation,
    
    $$\frac{\partial \mathcal{L}(X)}{\partial A} = 2AX \left(\sum_{i=1}^S (\mathds{1}_S - e_i)(\mathds{1}_S - e_i)^\top \right) X^\top - 2X \left( \sum_{i=1}^S e_i(\mathds{1}_S - e_i)^\top \right) X^\top$$.

Note that $\sum_{i=1}^S (\mathds{1}_S - e_i)(\mathds{1}_S - e_i)^\top = (S-2)\mathds{1}_{S \times S} + \mathbb{I}_{S \times S}$ and $\sum_{i=1}^S e_i (\mathds{1}_S - e_i)^\top = \mathds{1}_{S \times S} - \mathbb{I}_{S \times S}$. Now, let our sentences be $X_1, X_2, \cdots, X_N$. The minimizer of the overall loss thus satisfies

\begin{equation}
    \label{eq:min-loss}
    A \hspace{1mm} \frac{1}{N} \sum_{k=1}^N X_k \left( (S-2)\mathds{1}_{S \times S} + \mathbb{I}_{S \times S}\right) X_k^\top = \frac{1}{N} \sum_{k=1}^N X_k \left( \mathds{1}_{S \times S} - \mathbb{I}_{S \times S}\right) X_k^\top.
\end{equation}

We denote the number of (center, context) pairs across all sentences in which the center is $i$ and the context is $j$  by $\#(i,j)$. Moreover, we define $\#(i) = \sum_{j \in |V|} \#(i,j)$. It is easy to see that Equation (\ref{eq:min-loss}) can be rewritten as

$$A \left( (S-2) \tilde{B} + \tilde{C}\right) = \tilde{B},$$

where $\tilde{B}$ is a matrix such that its $(i,j)$-th entry is $\frac{\#(i,j)}{N}$ and $\tilde{C}$ is a diagonal matrix such that its $i$-th diagonal element is $\frac{\#(i)}{N}$. As $N \rightarrow \infty$, an application of the law of large numbers yields $\frac{\#(i,j)}{N} \rightarrow S(S-1)p(i,j)$ almost surely and $\frac{\#(i)}{N} \rightarrow S(S-1)p(i)$ almost surely, where $p(i,j)$ is the probability that for a given (center, context) pair, the center is $i$ and the context is $j$, and $p(i) = \sum_{j \in |V|} p(i, j)$. 

Thus, as $N \rightarrow \infty$, we have

$$A = B \left((S-2)B + C \right)^{-1},$$

where $B$ and $C$ are defined in the statement of Lemma \ref{lem:1}. \end{proof}

We now define
\begin{itemize}
    \item $p_1 = p(c_i, c_j) = p(d_i, d_j) = p(c_i, d_j) = p(d_i, c_j)$ for any $i \neq j$;
    \item $p_2 = p(r_i, r_j)$ for any $i \neq j$;
    \item $p_3 = p(c_i, d_i) = p(d_i, c_i)$ for any $i$; 
    \item $p_4 = p(c_i, r_j) = p(d_i, r_j) = p(r_j, c_i) = p(r_j, d_i)$ for any $i,j$,
\end{itemize}

where the equalities in the probabilities are a consequence of the data distribution. 

For ease of presentation, we denote a square matrix with $\alpha$ on the diagonal and $\beta$ off the diagonal as $X_{\alpha, \beta}$, and a matrix with all entries $\gamma$ as $Y_\gamma$. We then have

\[
B = \begin{bmatrix}
    X_{0,p_1} & X_{p_3,p_1} & Y_{p_4} \\
    X_{p_3,p_1} & X_{0,p_1} & Y_{p_4} \\
    Y_{p_4} & Y_{p_4} & X_{0, p_2}
\end{bmatrix}.
\]

Now, define $a = (S-2) p_1$, $b = (S-2) p_2$, $c = (S-2) p_3$, $d = (S-2) p_4$, $e = 2(K-1)p_1 + p_3 + Lp_4$, and $f = (L-1)p_2 + 2Kp_4$. It is easy to see that 

\[
(S-2)B + C = \begin{bmatrix}
    X_{e, a} & X_{c, a} & Y_{d} \\
    X_{c,a} & X_{e, a} & Y_{d} \\
    Y_{d} & Y_{d} & X_{f, b}
\end{bmatrix}.
\]

Moreover, its inverse can be written as 

\[
((S-2)B + C)^{-1} = \begin{bmatrix}
    X_{q_5, q_1} & X_{q_3, q_1} & Y_{q_4} \\
    X_{q_3, q_1} & X_{q_5, q_1} & Y_{q_4} \\
    Y_{q_4} & Y_{q_4} & X_{q_6, q_2}
\end{bmatrix},
\]

where

$\Delta = 2a(K-1)(b(L-1)+f) + b(L-1)(c+e) + cf - 2d^2 KL + ef$,
    
$q_1 = -\left(\frac{-a b L + a b - a f + d^2 L}{(2a - c - e)\Delta}\right)$,
    
$q_2 = \frac{2ab(K-1) + b(c+e) - 2d^2K}{(b-f)\Delta}$,
    
$q_3 = -\left(\cfrac{\splitfrac{-2a^2b(K-1)(L-1) - 2a^2f(K-1)+2abc(K-2)(L-1)+2acf(K-2)}{+2(a-c)d^2KL+bc(c+e)(L-1)+cf(c+e)+d^2L(c-e)}}{(c-e)(2a-c-e)\Delta}\right)$,
    
$q_4 = -\left(\frac{d}{\Delta}\right)$,
    
$q_5 = -\left(\cfrac{\splitfrac{-2a^2b(K-1)(L-1) - 2a^2f(K-1)+2abe(K-2)(L-1)+2aef(K-2)}{+2(a-e)d^2KL+be(c+e)(L-1)+ef(c+e)+d^2L(e-c)}}{(e-c)(2a-c-e)\Delta}\right)$,
    
and $q_6 = -\left(\frac{2a(K-1)(b(L-2)+f) + b(L-2)(c+e)+cf-2d^2KL + 2d^2K + ef}{(b-f)\Delta}\right)$.

By computing $A = B((S-2)B+C)^{-1}$, given the following center words, the similarities between them and all possible context words are as follows:

\begin{itemize}
    \item Center word = $c_i$ for any $i$
    \begin{itemize}
        \item $c_i: 2(K-1)p_1q_1 + p_3q_3 + Lp_4q_4$;
        \item $c_j: 2(K-2)p_1q_1 + p_1q_5 + p_3q_1 + p_1q_3 + Lp_4q_4$ ($j \neq i$); 
        \item $d_i: 2(K-1)p_1q_1 + p_3q_5 + Lp_4q_4$;
        \item $d_j: 2(K-2)p_1q_1 + p_1q_3 + p_3q_1 + p_1q_5 + Lp_4q_4$ ($j \neq i$); 
        \item $r_j: 2(K-1)p_1q_4 + p_3q_4 + p_4q_6 + (L-1)p_4q_2$ (for any $j$).
    \end{itemize}
    \item Center word = $d_i$ for any $i$
    \begin{itemize}
        \item $d_i: 2(K-1)p_1q_1 + p_3q_3 + Lp_4q_4$;
        \item $d_j: 2(K-2)p_1q_1 + p_1q_5 + p_3q_1 + p_1q_3 + Lp_4q_4$ ($j \neq i$); 
        \item $c_i: 2(K-1)p_1q_1 + p_3q_5 + Lp_4q_4$;
        \item $c_j: 2(K-2)p_1q_1 + p_1q_3 + p_3q_1 + p_1q_5 + Lp_4q_4$ ($j \neq i$); 
        \item $r_j: 2(K-1)p_1q_4 + p_3q_4 + p_4q_6 + (L-1)p_4q_2$ (for any $j$).
    \end{itemize}
    \item Center word = $r_i$
    \begin{itemize}
        \item $c_j: 2(K-1)p_4q_1 + p_4q_5 + p_4q_3 + (L-1)p_2q_4$ (for any $j$);
        \item $d_j: 2(K-1)p_4q_1 + p_4q_5 + p_4q_3 + (L-1)p_2q_4$ (for any $j$);
        \item $r_i: 2Kp_4q_4 + (L-1)p_2q_2$;
        \item $r_j: 2Kp_4q_4 + (L-2)p_2q_2 + p_2q_6$ ($j \neq i$).
    \end{itemize}
\end{itemize}

Recall that the ICL problem of interest is the following: given context words $c_{i_1} d_{i_1} \cdots c_{i_{\ell}} d_{i_{\ell}} c_{i_{\ell+1}}$, we aim to predict $d_{i_{\ell+1}}$. Without loss of generality, we can rewrite the problem to predict $d_{\ell + 1}$ given context words $c_1 d_1 \cdots c_\ell d_\ell c_{\ell+1}$. We now compute the total similarity for each possible center word, where $\epsilon^\top \delta$ indicates the similarity between the word $\epsilon$ in the center and the word $\delta$ in the context. 
\begin{itemize}
    \item $c_1$ (or any of $c_2, \cdots, c_\ell$) $: c_1^\top c_1 + \ell c_1^\top c_2 + c_1^\top d_1 + (\ell - 1) c_1^\top d_2$;
    \item $d_1$ (or any of $d_2, \cdots, d_\ell$) $: c_1^\top d_1 + \ell c_1^\top d_2 + c_1^\top c_1 + (\ell - 1) c_1^\top c_2$;
    \item $r_1$ (or any other $r_k$'s) $:(\ell+1)r_1^\top c_1 + \ell r_1^\top d_1 = (2\ell + 1) r_1^\top c_1$;
    \item $c_{\ell+1}: \ell c_1^\top c_2 + \ell c_1^\top d_2 + c_1^\top c_1$;
    \item $d_{\ell+1}: \ell c_1^\top d_2 + \ell c_1^\top c_2 + c_1^\top d_1$;
    \item $c_{\ell+2}$ (or any $c_k$'s not in the context prompt) $: (\ell+1) c_1^\top c_2 + \ell c_1^\top d_2$;
    \item $d_{\ell+2}$ (or any $d_k$'s not in the context prompt) $: (\ell+1) c_1^\top d_2 + \ell c_1^\top c_2$.
\end{itemize}

Note that correctly predicting $d_{\ell+1}$ is equivalent to the following conditions being simultaneously satisfied:
\begin{itemize}
    \item $c_1^\top d_1 > c_1^\top c_1$, equivalent to $p_3 q_5 > p_3 q_3$;
    \item $c_1^\top d_2 > c_1^\top c_1$ and $c_1^\top c_2 > c_1^\top c_1$, equivalent to $p_1q_3 + p_3q_1 + p_1q_5 > 2p_1q_1 + p_3q_3$;
    \item $c_1^\top d_1 > c_1^\top c_2$ and $c_1^\top d_1 > c_1^\top d_2$, equivalent to $2p_1q_1 + p_3q_5 \geq p_1q_5 + p_1q_3 + p_3q_1$;
    \item $2\ell c_1^\top c_2 + c_1^\top d_1 > (2\ell+1)r_1^\top c_1$, equivalent to $2\ell (2(K-2)p_1q_1 + p_1q_5 + p_3q_1 + p_1q_3 + Lp_4q_4) + 2(K-1)p_1q_1 + p_3q_5 + Lp_4q_4 > (2\ell+1)(2(K-1)p_4q_1 + p_4q_5 + p_4q_3 + (L-1)p_2q_4)$;
\end{itemize}

In our data generating process, it is easy to see that $p_1 = 0$, $p_2 = \frac{(S-2)(S-3)}{L(L-1)}$, $p_3 = \frac{1}{K}$, and $p_4 = \frac{S-2}{KL}$, where each $p_i$ is multiplied by a constant $S(S-1) > 0$ (without loss of generalization) to make calculations easier. From here, we have $a = 0$, $b = \frac{(S-2)^2 (S-3)}{L(L-1)}$, $c = \frac{S-2}{K}$, $d = \frac{(S-2)^2}{KL}$, $e = \frac{S-1}{K}$, and $f = \frac{(S-1)(S-2)}{L}$. Substituting to the above, we have
\begin{itemize}
    \item $q_1 = \frac{(S-2)^4}{\Delta KL (2S-3)}$;
    \item $q_3 = \frac{-K(S-2)^2(S-1)^2 - (S-2)^4}{\Delta KL (2S-3)}$;
    \item $q_4 = \frac{-(2S-3)(S-2)^2}{\Delta KL (2S-3)}$;
    \item $q_5 = \frac{K(S-2)(S-1)^3 + (S-2)^4}{\Delta KL (2S-3)}$,
\end{itemize}
where $\Delta = \frac{(S-1)^2 (S-2)}{KL} > 0$.

We now check when these conditions are simultaneously satisfied. The first condition is equivalent to $p_3 > 0$ and $K > \frac{2(S-2)^3}{(S-1)^2 (2S-3)}$, which always hold. The second condition reduces to $p_3 > 0$ and $2(S-2)^4 + K(S-2)^2(S-1)^2 > 0$, which is also true. The third condition can be written as $p_3 > 0$ and $K(S-2)(S-1)^3 > 0$, which always hold. The last condition becomes

$$(2\ell+1)((K+L)(S-2)^2(S-1)+K(S-2)(S-1)^2-2(S-2)^4) < KL(S-1)^3,$$

which is equivalent to

$$2\ell + 1 < \frac{KL(S-1)^3}{(K+L)(S-2)^2(S-1)+K(S-2)(S-1)^2-2(S-2)^4},$$

completing the proof. 

Note that this condition ensures that the model predicts $d_{\ell + 1}$ instead of one of the $r_i$'s.
\end{proof}

\newpage
\section{Comparison of ICL performance using squared and cross-entropy loss across different numbers of examples}
\label{app:comp-loss-ex}

\begin{table}[h]
    \centering
    \caption{ICL performance in the \textit{clean} scenario, evaluated with both squared and cross-entropy loss functions across different numbers of examples (0 to 8) with $d_E = 100$, averaged over 10 repetitions.}
    \begin{tabular}{c p{0.7cm} p{0.7cm} p{0.7cm} p{0.7cm} p{0.7cm} c p{0.7cm} p{0.7cm} p{0.7cm} p{0.7cm} p{0.7cm}}
    \toprule
    \multicolumn{1}{c}{} & \multicolumn{5}{c}{Squared} & \multicolumn{1}{c}{} & \multicolumn{5}{c}{Cross-entropy} \\
    \cmidrule(lr){2-6} \cmidrule(lr){8-12}
    $(p_0, p_1, p_2)$ & 0 & 2 & 4 & 6 & 8 & & 0 & 2 & 4 & 6 & 8 \\
    \midrule
    $(0, 1, 0)$ & 1 & 1 & 0 & 0 & 0 & & 0.87 & 0 & 0 & 0 & 0 \\
    $(0, 0, 1)$ & 1 & 1 & 1 & 0 & 0 & & 1 & 0 & 0 & 0 & 0 \\
    $(1/2, 1/2, 0)$ & 1 & 1 & 1 & 1 & 1 & & 1 & 1 & 0.34 & 0 & 0 \\
    $(1/2, 0, 1/2)$ & 1 & 1 & 1 & 1 & 1 & & 1 & 1 & 1 & 1 & 1 \\
    $(0, 1/2, 1/2)$ & 1 & 1 & 1 & 1 & 1 & & 1 & 1 & 1 & 0 & 0 \\
    $(1/3, 1/3, 1/3)$ & 1 & 1 & 1 & 1 & 1 & & 1 & 1 & 1 & 1 & 0 \\
    \bottomrule
    \end{tabular}
    \label{tab:clean-sq-ce}
\end{table}

\begin{table}[h]
    \centering
    \caption{ICL performance in the \textit{corrupted} scenario, evaluated with both squared and cross-entropy loss functions across different numbers of examples (0 to 8) with $d_E = 100$, averaged over 10 repetitions.}
    \begin{tabular}{c p{0.7cm} p{0.7cm} p{0.7cm} p{0.7cm} p{0.7cm} c p{0.7cm} p{0.7cm} p{0.7cm} p{0.7cm} p{0.7cm}}
    \toprule
    \multicolumn{1}{c}{} & \multicolumn{5}{c}{Squared} & \multicolumn{1}{c}{} & \multicolumn{5}{c}{Cross-entropy} \\
    \cmidrule(lr){2-6} \cmidrule(lr){8-12}
    $(p_0, p_1, p_2)$ & 0 & 2 & 4 & 6 & 8 & & 0 & 2 & 4 & 6 & 8 \\
    \midrule
    $(0, 1, 0)$ & 1 & 0 & 0 & 0 & 0 & & 0 & 0 & 0 & 0 & 0 \\
    $(0, 0, 1)$ & 1 & 0.97 & 0 & 0 & 0 & & 1 & 0 & 0 & 0 & 0 \\
    $(1/2, 1/2, 0)$ & 1 & 1 & 1 & 0.53 & 0 & & 1 & 0 & 0 & 0 & 0 \\
    $(1/2, 0, 1/2)$ & 1 & 1 & 1 & 1 & 1 & & 1 & 1 & 1 & 1 & 1 \\
    $(0, 1/2, 1/2)$ & 1 & 1 & 0.76 & 0 & 0 & & 1 & 1 & 0 & 0 & 0 \\
    $(1/3, 1/3, 1/3)$ & 1 & 1 & 1 & 1 & 1 & & 1 & 1 & 1 & 0.18 & 0 \\
    \bottomrule
    \end{tabular}
    \label{tab:corr-sq-ce}
\end{table}

From Tables \ref{tab:clean-sq-ce} and \ref{tab:corr-sq-ce}, we observe that 
ICL with CBOW on single-relationship tasks performs better with squared loss compared to cross-entropy loss and with fewer demonstration examples. Also, ICL tends to deteriorate after a certain number of in-context demonstrations. As detailed in Appendix \ref{app:proof-1}, a smaller number of examples (e.g., zero) allows the model to produce the correct output instead of one of the $r_i$'s. This is in contrast with transformer-based LLMs, which achieve better ICL performance as the number of demonstrations increases. On the other hand, ICL on dual-relationship tasks as described in Section \ref{sec:icl-connected} requires at least one demonstration example to distinguish between the two tasks.

\newpage

\section{Proof of Theorem \ref{prop:icl-cbow-2}}
\label{app:proof-thm-2}
\begin{proof}
We show that given a prompt of the form $c_{i_1} d_{i_1} \cdots c_{i_{\ell}} d_{i_{\ell}} c_{i_{\ell+1}}$ with distinct $i_k$'s, a trained CBOW model is more likely to predict $d_{i_{\ell+1}}$ than $e_{i_{\ell+1}}$. If this is established, the other part of the theorem follows analogously. We now define

\begin{itemize}
    \item $p_1 = p(c_i, d_j) = p(d_i, c_j) = p(d_i, d_j) = p(c_i, e_j) = p(e_i, c_j) = p(e_i, e_j)$ for any $i \neq j$;
    \item $p_2 = p(r_i, r_j)$ for any $i \neq j$;
    \item $p_3 = p(c_i, d_i) = p(d_i, c_i) = p(c_i, e_i) = p(e_i, c_i)$;
    \item $p_4 = p(d_i, r_j) = p(r_i, d_j) = p(e_i, r_j) = p(r_i, e_j)$ for any $i,j$;
\end{itemize}

where the equalities in the probabilities are a consequence of the data distribution. By direct calculation, we have $p_1 = \frac{1}{K(K-1)}$, $p_2 = \frac{(S-4)(S-5)}{L(L-1)}$, $p_3 = \frac{1}{K}$, and $p_4 = \frac{S-4}{KL}$, where each $p_i$ is multiplied by $S(S-1) > 0$ (without loss of generalization) to make calculations easier. Moreover, it is easy to see that $p(c_i, r_j) = p(r_i, c_j) = 2p_4$ for any $i,j$ and $p(c_i,c_j) = 2p_1$ for any $i \neq j$. Lastly, we define $a = (S-2)p_1$, $b = (S-2) p_2$, $c = (S-2) p_3$, $d = (S-2) p_4$, $e = 2(K-1)p_1 + p_3 + Lp_4$, and $f = 4Kp_4 + (L-1)p_2$.

The next step the proof is to use Lemma \ref{lem:1} in Appendix \ref{app:proof-1} to obtain the similarity matrix $A$. As previously, we denote a square matrix with $\alpha$ on the diagonal and $\beta$ off the diagonal as $X_{\alpha, \beta}$, and a matrix with all entries $\gamma$ as $Y_\gamma$. We then have
\[
B = \begin{bmatrix}
    X_{0, 2p_1} & X_{p_3, p_1} & X_{p_3, p_1} & Y_{2p_4} \\
    X_{p_3, p_1} & X_{0, p_1} & Y_0 & Y_{p_4} \\
    X_{p_3, p_1} & Y_0 & X_{0, p_1} & Y_{p_4} \\
    Y_{2p_4} & Y_{p_4} & Y_{p_4} & X_{0, p_2}
\end{bmatrix}
\]
and
\begin{equation}
    \label{mat:actual}
   (S-2)B + C = \begin{bmatrix}
    X_{2e,2a} & X_{c,a} & X_{c,a} & Y_{2d} \\
    X_{c,a} & X_{e,a} & Y_0 & Y_{d} \\
    X_{c,a} & Y_0 & X_{e,a} & Y_{d} \\
    Y_{2d} & Y_{d} & Y_{d} & X_{f,b}
\end{bmatrix}. 
\end{equation}

Moreover, its inverse can be written as 

\begin{equation}
    \label{mat:inverse}
    ((S-2)B + C)^{-1} = \begin{bmatrix}
    X_{q_2, q_1} & X_{q_3, q_1} & X_{q_3, q_1} & Y_{q_4} \\
    X_{q_3, q_1} & X_{q_5, q_6} & X_{q_7, q_8} & Y_{q_4} \\
    X_{q_3, q_1} & X_{q_7, q_8} & X_{q_5,q_6} & Y_{q_4} \\
    Y_{q_4} & Y_{q_4} & Y_{q_4} & X_{q_9,q_{10}}
\end{bmatrix},
\end{equation}

for some $q_1, q_2, \cdots, q_{10}$. Recall that we want to show that given $c_{i_1}, d_{i_1}, \cdots, c_{i_{\ell}}, d_{i_{\ell}} c_{i_{\ell+1}}$ with distinct $i_k$'s as context words, the center word is more likely to be $d_{i_{\ell+1}}$ than $e_{i_{\ell+1}}$. In other words, we need to establish that
\begin{align*}
&d_{i_{\ell + 1}}^\top c_{i_1} + d_{i_{\ell + 1}}^\top d_{i_1} + \cdots + d_{i_{\ell + 1}}^\top c_{i_\ell} + d_{i_{\ell + 1}}^\top d_{i_\ell} + d_{i_{\ell + 1}}^\top c_{i_\ell+1} \\
&> e_{i_{\ell + 1}}^\top c_{i_1} + e_{i_{\ell + 1}}^\top d_{i_1} + \cdots + e_{i_{\ell + 1}}^\top c_{i_\ell} + e_{i_{\ell + 1}}^\top d_{i_\ell} + e_{i_{\ell + 1}}^\top c_{i_\ell+1},
\end{align*}
where $\epsilon^\top \delta$ indicates the similarity between the word $\epsilon$ in the center and the word $\delta$ in the context. This similarity can be obtained from the matrix $A = B((S-2)B+C)^{-1}$. By symmetry, the inequality reduces to $d_i^\top d_j > e_i^\top d_j$ for any $i \neq j$.

By computing the matrix $A$, we have 

$$d_i^\top d_j = p_3 q_1 + p_1 q_3 + (K-2) p_1 q_1 + (K-2) p_1 q_6 + L p_4 q_4 + p_1 q_5$$ 

and 

$$e_i^\top d_j = p_3 q_1 + p_1 q_3 + (K-2) p_1 q_1 + (K-2)p_1 q_8 + p_1 q_7 + L p_4 q_4.$$

Thus, our problem again reduces to showing $(K-2) q_6 + q_5 > (K-2) q_8 + q_7$ as $p_1 = \frac{1}{K(K-1)} > 0$. Upon multiplying \eqref{mat:inverse} and \eqref{mat:actual} and equating the result with the identity matrix, we have the following equations:

\begin{align}
    a(K-1)q_1 + cq_3 + dLq_4 + eq_5 + a(K-1)q_6 &= 1  \label{eq:equation1}\\
    (c+a(K-2))q_1 + aq_3 + dLq_4 + aq_5 + (e+a(K-2))q_6 &= 0 \label{eq:equation2}\\
    a(K-1)q_1 + cq_3 + dLq_4 + eq_7 + a(K-1)q_8 &= 0 \label{eq:equation3} \\
    (c+a(K-2))q_1 + aq_3 + dLq_4 + aq_7 + (e+a(K-2))q_8 &= 0. \label{eq:equation4}
\end{align}

Comparing \eqref{eq:equation2} and \eqref{eq:equation4} yields 

$$a (((K-2)q_6 + q_5) - ((K-2)q_8 + q_7)) = e(q_8 - q_6).$$

As $a = (S-2)p_1 > 0$ and $e = 2p_1(K-1) + p_3 + p_4L > 0$, we now only need to show that $q_8 > q_6$. Comparing \eqref{eq:equation1} and \eqref{eq:equation3} as well as \eqref{eq:equation2} and \eqref{eq:equation4}, we have

\begin{align*}
    a(q_5 - q_7) &= (e+a(K-2))(q_8 - q_6) \\
    e(q_5 - q_7) &= a(K-1)(q_8 - q_6) + 1,
\end{align*}

which reduces to $(q_8 - q_6)(e^2 + ae(K-2) - a^2(K-1)) = a.$ The conclusion follows since $a > 0$ and
$$e^2 + ae(K-2) - a^2(K-1) = (e-a)(e+a(K-1)) = \left( \frac{S-1}{K} - \frac{S-2}{K(K-1)} \right)(e+a(K-1)) > 0.$$
\end{proof}

% \section{Corpus generation process for experiments in Section \ref{sec:synth-exp}}
% \label{app:corpus}

% \begin{enumerate}
%     \item Randomly select 10 countries and obtain their capital cities and IOC codes.
%     \item Generate 30 sentences containing exactly one country-capital pair (3 for each country).
%     \item Generate 30 sentences containing exactly one country-IOC pair (3 for each country).
%     \item Generate 30 sentences containing exactly one country without any pair.
%     \item Generate 60 sentences without any country, capital city, or IOC code.
%     \item Generate 810 sentences containing exactly two different country-capital pairs by concatenating sentences generated in Step 2.
%     \item Generate 810 sentences containing exactly two different country-IOC pairs by concatenating sentences generated in Step 3.
% \end{enumerate}

\newpage

\section{Proof of Theorem \ref{prop:icl-pe}}
\label{app:proof-2}
\begin{proof}
Consider the instance of predicting $a$ from $abc$, i.e., $f(\{a,b\},c)$. By the assumption on the data distribution, it is equally likely that the task is predicting $b$ from $bac$. In this case, the corresponding function is also $f(\{a,b\},c)$. Thus, the sum of the cross-entropy losses corresponding to these two tasks is lower bounded by $2 \log 2 > 0$. Also, it is easy to see that we cannot achieve perfect accuracy since the predictions for $abc$ and $bac$ must be the same.

We now show that it is possible to attain zero loss and perfect accuracy when the model includes positional embeddings, so that  
$\tilde{f}(\{ (a, 1), (b, 2)\}, (c, 3)) \neq \tilde{f}(\{ (b, 1), (a, 2)\}, (c, 3))$. As a special case, we consider a simplified version of the transformer architecture, where 

$$\tilde{f}(\{ (a, 1), (b, 2)\}, (c, 3)) = \frac{\sum_{k \in \{a,b,c\}} (x_k + p_1) \exp((x_k + p_1)^\top (x_c + p_3))}{\sum_{k \in \{a,b,c\}} \exp((x_k + p_1)^\top (x_c + p_3))}.$$

and 

$$p(d \mid abc) \propto \exp\left( x_d^\top  \tilde{f}(\{ (a, 1), (b, 2)\}, (c, 3)) \right). $$

for any token $d$. Here, $x_i$ and $p_j$ represent the embedding of token $i$ and position $j$, respectively.

Let $p_1^\top p_3 = p$, $p_2^\top p_3 = q$, $p_3^\top p_3 = r$, $x_i^\top x_i = s$, $x_i^\top x_j = t$ for any $i \neq j$, $p_1^\top x_i = u$ for any $i$, $p_2^\top x_i = v$ for any $i$, and $p_3^\top x_i = w$ for any $i$. Note that this holds due to the assumed data generating process. We consider the following construction: $p_1 = b \mathds{1}_{|V|}$, $p_2 = p_3 = \mathds{1}_{|V|}$, and $x_i = a e_i$, where $e_i$ is a zero vector with $1$ on the $i$-th entry. This implies $p = b|V|$, $q = r = |V|$, $s = a^2$, $t = 0$, $u = ab$, and $v = w = a$. 

By direct calculation, the cross-entropy loss of predicting $a$ from $abc$ is given by

$$-\log\left( \frac{\exp(\alpha_1 a^2)}{\exp(\alpha_1 a^2) + \exp(\alpha_2 a^2) + \exp(\alpha_3 a^2) + |V| - 3}\right),$$

where 

$$\alpha_1 = \frac{\exp(ab + b|V|)}{\exp(ab + b|V|) + \exp(a+|V|) + \exp(a^2+a+|V|)},$$

$$\alpha_2 = \frac{\exp(a + |V|)}{\exp(ab + b|V|) + \exp(a+|V|) + \exp(a^2+a+|V|)},$$

$$\alpha_3 = \frac{\exp(a^2 + a + |V|)}{\exp(ab + b|V|) + \exp(a+|V|) + \exp(a^2+a+|V|)}.$$

Letting $b = a^2$ and $a \rightarrow \infty$, it is easy to see that we can bring the cross-entropy loss arbitrarily close to zero. Consequently, we also have a perfect prediction accuracy.
\end{proof}

\section{Proof of Proposition \ref{lem:pe}}
\label{app:proof-pe}

\begin{proof}
    The intermediate representation of the first layer is given by $f_1(\{x_{i_1}\})$, $f_2(\{x_{i_1}\},x_{i_2}) \}$, $f_3(\{x_{i_1},x_{i_2}\},x_{i_3})$, and $f_4(\{x_{i_1},x_{i_2}, x_{i_3}\},x_{i_1})$, for some functions $f_1, f_2, f_3$, and $f_4$. To predict the last $x_{i_1}$, we use the third coordinate of the second layer representation, which is given by $t(x_{i_1} x_{i_2} x_{i_3}) := g_3\left( \{ f_1(\{x_{i_1}\}), f_2(\{x_{i_1}\},x_{i_2}) \}, f_3(\{x_{i_1},x_{i_2}\},x_{i_3}) \right)$, for some function $g_3$. It is easy to see that in general, $t(x_{i_1} x_{i_2} x_{i_3}) \neq t(x_{i_2} x_{i_1} x_{i_3})$.
\end{proof}

\section{Proof of Theorem \ref{prop:icl-one-pattern}}
\label{app:proof-icl-one-pattern}

\begin{proof}

In the one-noisy scenario, each sentence takes one of the following forms: \textit{nabacdc}, \textit{anbacdc}, \textit{abnacdc}, \textit{abancdc}, \textit{abacndc}, and \textit{abacdnc}, where \textit{n} $\in \mathcal{N}$. In order to achieve the minimum possible theoretical loss, we minimize each loss term separately. Concretely, the minimum loss of predicting the sixth token given the first five tokens is attained by the following rule:

\begin{itemize}
    \item When the first five tokens do not contain any nuisance token, output a uniform probability vector over $\mathcal{N}$.
    \item Otherwise, output the conditional probability of $c[2]$ given $x$, where $(x, c[2]) \in S_1$. Here, $x$ represents the last non-nuisance token.
\end{itemize}

Under this rule, the predicted output for any in-context example \textit{\underline{abacd}} is never $c$, since $c \notin \mathcal{N}$. In the block-noisy scenario, each sentence takes one of the following forms: $n_1 n_2 n_3 aba cdc$, $aba n_1 n_2 n_3 cdc$, and $aba cdc n_1 n_2 n_3$, where $n_1, n_2, n_3 \in \mathcal{N}$. The minimum loss of predicting the ninth token given the first eight tokens is attained by the following rule:

\begin{itemize}
    \item When the seventh token is not a nuisance token, output the seventh token with probability one.
    \item When the seventh token is a nuisance token, output a uniform probability vector over $\mathcal{N}$.
\end{itemize}

Under this rule, the predicted output for any in-context example \textit{\underline{abacdcef}} is \textit{e}, resulting in perfect ICL accuracy.

\end{proof}
\newpage
\section{Proof of Theorem \ref{prop:failed-1}}
\label{app:proof-failed-1}

\begin{proof}
Recall that each training sentence is of the form $x_{11}x_{12}x_{11}x_{21}x_{22}x_{21} \cdots x_{N1}x_{N2}x_{N1}$. Note that we can decompose the total loss $\mathcal{L}$ into $\mathcal{L}_1 + \mathcal{L}_2 + \cdots + \mathcal{L}_{3N}$, where $\mathcal{L}_g$ denotes the loss of predicting the $g$-th token given all the other previous tokens. As the $x_{i1}x_{i2}x_{i1}$ blocks are generated independently, the optimal loss should satisfy $\mathcal{L}_1 = \mathcal{L}_4 = \cdots = \mathcal{L}_{[3N-2]} = \mathcal{L}_{[1]}$, $\mathcal{L}_2 = \mathcal{L}_5 = \cdots = \mathcal{L}_{[3N-1]} = \mathcal{L}_{[2]}$, and $\mathcal{L}_3 = \mathcal{L}_6 = \cdots = \mathcal{L}_{[3N]} = \mathcal{L}_{[3]}$. Therefore, it is sufficient to minimize $\mathcal{L}_{[1]} + \mathcal{L}_{[2]} + \mathcal{L}_{[3]}$. 

In order to achieve the minimum possible theoretical loss, we need to minimize $\mathcal{L}_{[1]}$, $\mathcal{L}_{[2]}$, and $\mathcal{L}_{[3]}$ separately. It is easy to see that $\mathcal{L}_{[1]}$ is minimized by outputting the marginal probability of $c[1]$, where $c \in S_1$. Similarly, $\mathcal{L}_{[2]}$ is minimized by outputting the conditional probability of $c[2]$ given $x_{i_1}$, where $(x_{i_1}, c[2]) \in S_1$. On the other hand, it is possible to achieve an $\mathcal{L}_{[3]}$ value of zero by outputting $x_{i_1}$ with probability one.  

Now, given an ICL prompt $\underline{x_{11}x_{12}x_{12}x_{21}x_{22}x_{22}} \cdots \underline{x_{\ell1}x_{\ell2}}$ where $\ell \leq N$, the trained model should predict $x_{\ell 1}$ with probability one since $\{c[1] \mid c \in S_2\} = \mathcal{V}$ and our ICL prompt corresponds to $\mathcal{L}_{[3]}$. This completes the proof.
\end{proof}

\section{Proof of Theorem \ref{prop:failed-2}}
\label{app:proof-failed-2}

\begin{proof}
We proceed similarly as the proof of Theorem \ref{prop:failed-1}. Concretely, we separately minimize $\mathcal{L}_g$ for $g \in [2k+2]$, where $\mathcal{L}_g$ denotes the loss of predicting the $g$-th token given all the other previous tokens. It is easy to see that $\mathcal{L}_1$ is minimized by outputting a uniform probability vector over $a_{1:[I]}$, whereas $\mathcal{L}_h$ (for any $2 \leq h \leq 2k+1$) is minimized by outputting a uniform probability vector over $\mathcal{V}$. Moreover, it is possible to achieve an $\mathcal{L}_{2k+2}$ value of zero by outputting $b_i$ with probability one.

From here, given an ICL prompt of the form $a_{i_1}b_{i_1}a_{i_2}b_{i_2} \cdots a_{i_\ell}$, the trained model should predict a uniform probability vector over $\mathcal{V}$ if $\ell \leq k$, and $b_{i_1}$ if $\ell = k+1$. In all cases, the model does not predict $b_{i_\ell}$, completing the proof.
\end{proof}

\newpage

% \section{Distinguishing two different patterns requires more than one layers}
% \label{app:one-vs-five}

% \begin{figure}[h]
%     \centering
%     \begin{subfigure}[t]{0.49\textwidth} % Adjust width as needed
%         \centering
%         \includegraphics[width=\textwidth]{one-layer.png}
%     \end{subfigure}
%     \hfill % This will insert a space between the two figures
%     \begin{subfigure}[t]{0.49\textwidth} % Adjust width as needed
%         \centering
%         \includegraphics[width=\textwidth]{five-layer.png}
%     \end{subfigure}
%     \caption{One-layer models fail to differentiate the two patterns in Section \ref{sec:two-patterns}, as evidenced by the accuracy trajectory graph on the left. On the other hand, five-layer models are capable of doing so.}
%     \label{fig:one-vs-five}
% \end{figure}

\section{Details of experiments and data sets}
\label{app:exp-details}

\subsection{Architecture and implementation}
All experiments {utilize the Keras package in Python, employing} the Adam optimizer \citep{kingma2015adam} with a learning rate of 0.01. Early stopping is applied based on validation loss with a patience threshold of 5, utilizing a randomly selected subset representing 50\% of the original data set. {Each transformer layer uses two heads, as we empirically demonstrated that increasing the number of heads does not impact performance in our experiments. Each layer consists of the following components (in order): (1) Keras' multi-head causal self-attention block, with key\_dim $=$ value\_dim $=$ embed\_dim$/2$; (2) Skip connection and layer normalization; (3) One hidden layer feed-forward network using the ReLU activation with dimension $= 2 \times$ embed\_dim; and (4) Skip connection and layer normalization.} 

\subsection{Source and details of data sets}
The \textit{world\_population.csv} data set, used for the experiments in Section \ref{sec:co-occ}, is obtained from \href{https://www.kaggle.com/datasets/iamsouravbanerjee/world-population-dataset}{Kaggle}. According to the author, this data set is created from \href{https://worldpopulationreview.com/}{World Population Review}. 

The \textit{us-state-capitals.csv} data set, used for the experiments in the beginning of Section \ref{sec:co-occ}, is obtained from \href{https://github.com/jasperdebie/VisInfo/blob/master/us-state-capitals.csv}{this Github repository}. Its source is unclear.

The \textit{uscities.csv} data set, used for the experiments in the beginning of Section \ref{sec:co-occ}, is obtained from \href{https://simplemaps.com/data/us-cities}{Simple Maps}, with a CC 4.0 license.

\subsection{Details of synthetic data used in experiments}

{
Below we provide additional details regarding the synthetic data used in our experiments.}
\begin{enumerate}
    \item {For experiments in Table \ref{tab:one-rel}, the training data consists of 50,000 sentences. In the clean version, sentences are generated uniformly as described in Section \ref{sec:icl-single}. In the corrupted version, sentences are generated in a similar manner, but each $(c_i, d_i)$ pair is replaced by $(c_i, r_j)$ or $(d_i, r_j)$ with a probability of $1/4$ each. Test sentences are generated according to the setup in Theorem \ref{prop:icl-cbow}. Some examples are as follows:}
    \begin{itemize} 
    \item Clean
    \begin{itemize}
        \item Training: $c_1 d_1 r_1 r_2 r_3 r_4 r_5 r_6$ or $r_1 r_2 r_3 r_4 r_5 r_6 r_7 r_8$
        \item Prompt: $c_1 d_1 c_2 d_2 c_3 d_3 c_4 ?$
    \end{itemize}
    \item Corrupted
    \begin{itemize}
        \item Training: $c_1 r_1 r_2 r_3 r_4 r_5 r_6 r_7$ or $c_1 d_1 c_2 r_1 r_2 r_3 r_4 r_5$
        \item Prompt: $c_1 d_1 c_2 d_2 c_3 d_3 c_4 ?$
    \end{itemize}

    \end{itemize}
    \item {For experiments in Table \ref{tab:two-con}, the training data consists of 50,000 sentences. In the clean version, sentences are generated uniformly as described in Section \ref{sec:icl-connected}. In the imbalanced and extreme versions, the 60 other words are divided into three categories: 20 for $cd$ sentences ($rcd_\cdot$), 20 for $ce$ sentences ($rce_\cdot$), and 20 for both types ($r_\cdot$). In the imbalanced version, $cd$ ($ce$) sentences are 4 times more likely to sample a $cd$ ($ce$) word than a $ce$ ($cd$) word. In the extreme version, $cd$ ($ce$) sentences cannot contain any $ce$ ($cd$) words. Test sentences are generated according to the setup in Theorem \ref{prop:icl-cbow-2}. Some examples are as follows:}
    \begin{itemize}
        \item Clean examples
        \begin{itemize}
            \item Training: $c_1 d_1 r_1 r_2 r_3 r_4 r_5 r_6$ or $c_1 e_1 r_1 r_2 r_3 r_4 r_5 r_6$
            \item Prompt: $c_1 d_1 c_2 d_2 c_3 d_3 c_4 ?$ or $c_1 e_1 c_2 e_2 c_3 e_3 c_4 ?$
        \end{itemize}
        \item Imbalance examples
        \begin{itemize}
            \item Training: $c_1 d_1 rcd_1 rcd_2 rcd_3 rce_4 r_5 r_6$ or $c_1 e_1 rcd_1 r_2 rce_3 rce_4 rce_5 r_6$
            \item Prompt: $c_1 d_1 c_2 d_2 c_3 d_3 c_4 ?$ or $c_1 e_1 c_2 e_2 c_3 e_3 c_4 ?$
        \end{itemize}
        \item Extreme examples
        \begin{itemize}
            \item Training: $c_1 d_1 rcd_1 rcd_2 rcd_3 r_4 r_5 r_6$ or $c_1 e_1 r_1 r_2 r_3 rce_4 rce_5 r_6$
            \item Prompt: $c_1 d_1 c_2 d_2 c_3 d_3 c_4 ?$ or $c_1 e_1 c_2 e_2 c_3 e_3 c_4 ?$
        \end{itemize}
    \end{itemize}
    \item Experiments in Table \ref{tab:two-discon} follow the setup of experiments in Table \ref{tab:two-con}, except that the pairs are now of the form $(c_i, d_i)$ and $(e_i, f_i)$ instead of $(c_i, d_i)$ and $(c_i, e_i)$.
    \item For experiments in Section \ref{sec:synth-exp}, the corpus generation process is as follows:
    \begin{itemize}
        \item Randomly select 10 countries and obtain their capital cities and IOC codes.
        \item Generate 30 sentences containing exactly one country-capital pair (3 for each country). \textit{Example: Paramaribo is the vibrant heart of Suriname.}
        \item Generate 30 sentences containing exactly one country-IOC pair (3 for each country). \\ \textit{Example: Gabon (GAB) protects its diverse rainforests and wildlife.}
        \item Generate 30 sentences containing exactly one country without any pair. \\
        \textit{Example: The banking sector is central to Liechtenstein's prosperity.}
        \item Generate 60 sentences without any country, capital city, or IOC code. \\
        \textit{Example: Every country has its unique cultural identity and heritage.}
        \item Generate 810 sentences containing exactly two different country-capital pairs by concatenating sentences generated in Step 2. \\
        \textit{Example: The city of Dushanbe reflects Tajikistan's vibrant spirit. Roseau is the cultural tapestry of Dominica.}
        \item Generate 810 sentences containing exactly two different country-IOC pairs by concatenating sentences generated in Step 3. \\
        \textit{Example: Mayotte (MAY) features lush landscapes and peaks. Turkmenistan (TKM) features the fiery Darvaza Crater.}
    \end{itemize}

    The ICL prompts follow the form used in the country-capital city and US state-capital city experiments in the beginning of Section \ref{sec:co-occ}, with 1 to 5 in-context examples.

   \item For experiments in Table \ref{tab:icl-pos}, the training and test data consist of all sentences in the form $abca$, where $a$, $b$, and $c$ are distinct. Each test sentence is different from any training sentence. In the first scenario (both), the first tokens of the training sentences cover the entire vocabulary. In the second scenario (either), each token can be the first token in either the training or test data, but not both.

   \item For experiments in Table \ref{tab:icl-one-pattern}, the training data consists of 50,000 sentences generated uniformly as detailed in Section \ref{sec:one-pattern}. The ICL prompt formats are also described in Section \ref{sec:one-pattern}.

    \item For experiments in Table \ref{tab:icl-two-patterns}, the training data consists of 50,000 sentences. In the clean scenario, the training data are of the form $abcadefd$ and $abcbdefe$, with ICL prompts as $\overline{abcadef} ?$ and $\overline{abcbdef} ?$. In the block-noisy scenario, the training data include sequences like $n_1 n_2 n_3 n_4 abcadefd$ and $abcb n_1 n_2 n_3 n_4 defe$, with ICL prompts as $\overline{abcadefdghi} ?$ and $\overline{abcbdefeghi} ?$.

    \item For experiments in Table \ref{tab:icl-fail}, the training data consists of 50,000 sentences generated uniformly according to the processes in Sections \ref{sec:failed-1} and \ref{sec:failed-2}. The ICL prompt formats are also described in the same subsections.

    \item For experiments in Section \ref{sec:failed-synth}, the corpus generation process is as follows:
    \begin{itemize}
    \item Randomly select 10 countries and obtain their capital cities and IOC codes.
    \item Generate 130 sentences containing exactly one country-capital pair (13 for each country). \\ \textit{Example: Paramaribo stands as capital of Suriname.}
    \item Generate 30 sentences containing exactly one country without any pair. \\
    \textit{Example: The banking sector is central to Liechtenstein's prosperity.}
    \item Generate 60 sentences without any country, capital city, or IOC code. \\
    \textit{Example: Every country has its unique cultural identity and heritage.}
    \item Generate 1,000 sentences containing exactly two different country-capital pairs by concatenating sentences generated in Step 2. \\
    \textit{Example: Brazil functions as heart of Brasilia. Turkmenistan operates as center for Ashgabat.}
    \end{itemize}

\end{enumerate}

\end{document}